\documentclass{article}

\usepackage{microtype}
\usepackage{graphicx}
\usepackage{subcaption}
\usepackage{booktabs} %

\usepackage{hyperref}

 \usepackage[preprint]{icml2026}

\usepackage{amsmath}
\usepackage{amssymb}
\usepackage{mathtools}
\usepackage{amsthm}

\usepackage{booktabs}
\usepackage{multirow}
\usepackage{graphicx}
\usepackage[table]{xcolor}

\usepackage[capitalize,noabbrev]{cleveref}
\usepackage{xcolor}
\usepackage{colortbl}
\usepackage{pifont}   %
\usepackage{threeparttable}

\usepackage{tcolorbox}

\definecolor{sotaBlue}{HTML}{4A90E2} 

\newcommand{\cc}[1]{\cellcolor{blue!#1}}

\definecolor{deepblue}{RGB}{25, 50, 100}   %
\definecolor{softbg}{RGB}{248, 250, 252}   %

\tcbuselibrary{theorems, skins, breakable} %

\tcbset{
  commonstyle/.style={
    enhanced,
    boxrule=0.4mm,
    arc=0mm, %
    fonttitle=\bfseries\fontsize{9pt}{12pt}\selectfont,
    attach boxed title to top left={yshift=-2mm, xshift=1mm},
    boxed title style={rounded corners, arc=3pt},
    separator sign={~},
  }
}

\definecolor{NatObs}{HTML}{576FA0}  %
\definecolor{NatIns}{HTML}{A577AD}  %
\definecolor{NatFin}{HTML}{B57979}  %

\newtcbtheorem[]{observation}{Observation}{
  commonstyle,
  colframe=NatObs!75!black,
  colback=NatObs!10!white,
  coltitle=white,
  boxed title style={
    colback=NatObs,
    colframe=NatObs!75!black,
    coltext=white
  },
}{obs}

\newtcbtheorem[]{insight}{Insight}{
  commonstyle,
  colframe=NatIns!75!black,
  colback=NatIns!10!white,
  coltitle=white,
  boxed title style={
    colback=NatIns,
    colframe=NatIns!75!black,
    coltext=white
  },
}{ins}

\newtcbtheorem[]{finding}{Finding}{
  commonstyle,
  colframe=NatFin!75!black,
  colback=NatFin!10!white,
  coltitle=white,
  boxed title style={
    colback=NatFin,
    colframe=NatFin!75!black,
    coltext=white
  },
}{cor}

\newcommand{\xmark}{\ding{55}}        %
\newcommand{\halfmark}{\ensuremath{-}}

\definecolor{yesgreen}{RGB}{220,245,220}
\definecolor{nored}{RGB}{245,220,220}
\definecolor{halfyellow}{RGB}{255,245,200}
\definecolor{nogray}{RGB}{200,200,200}

\newcommand{\yescell}{\cellcolor{yesgreen}\checkmark}
\newcommand{\nocell}{\cellcolor{nored}\xmark}
\newcommand{\halfcell}{\cellcolor{nogray}\halfmark}

\theoremstyle{plain}
\newtheorem{theorem}{Theorem}[section]
\newtheorem{proposition}[theorem]{Proposition}

\theoremstyle{definition}
\newtheorem{definition}[theorem]{Definition}
\newtheorem{assumption}[theorem]{Assumption}
\theoremstyle{remark}

\usepackage[textsize=tiny]{todonotes}

\icmltitlerunning{A Tilted Seesaw: Revisiting Autoencoder Trade-off for Controllable Diffusion}

\begin{document}

\twocolumn[
  \icmltitle{A Tilted Seesaw: Revisiting Autoencoder Trade-off for Controllable Diffusion}

  \icmlsetsymbol{equal}{*}

  \begin{icmlauthorlist}
    \icmlauthor{Pu Cao}{bupt}
    \icmlauthor{Yiyang Ma}{bupt}
    \icmlauthor{Feng Zhou}{bupt}
    \icmlauthor{Xuedan Yin}{thu}
    \icmlauthor{Qing Song}{bupt}
    \icmlauthor{Lu Yang}{bupt}
  \end{icmlauthorlist}

  \icmlaffiliation{bupt}{Beijing University of Posts and Telecommunications, Beijing, China}
  \icmlaffiliation{thu}{Tsinghua University, Beijing, China}

  \icmlcorrespondingauthor{Lu Yang}{soeaver@bupt.edu.cn}

  \icmlkeywords{Autoencoder, Diffusion Model}

  \vskip 0.3in
]

\printAffiliationsAndNotice{}  %

\begin{abstract}
In latent diffusion models, the autoencoder (AE) is typically expected to balance two capabilities: faithful reconstruction and a generation-friendly latent space (e.g., low gFID).
In recent ImageNet-scale AE studies, we observe a systematic bias toward generative metrics in handling this trade-off: reconstruction metrics are increasingly under-reported, and ablation-based AE selection often favors the best-gFID configuration even when reconstruction fidelity degrades.
We theoretically analyze why this gFID-dominant preference can appear unproblematic for ImageNet generation, yet becomes risky when scaling to controllable diffusion: AEs can induce condition drift, which limits achievable condition alignment.
Meanwhile, we find that reconstruction fidelity, especially instance-level measures, better indicates controllability.
We empirically validate the impact of tilted autoencoder evaluation on controllability by studying several recent ImageNet AEs. Using a multi-dimensional condition-drift evaluation protocol reflecting controllable generation tasks, we find that gFID is only weakly predictive of condition preservation, whereas reconstruction-oriented metrics are substantially more aligned. ControlNet experiments further confirm that controllability tracks condition preservation rather than gFID.
Overall, our results expose a gap between ImageNet-centric AE evaluation and the requirements of scalable controllable diffusion, offering practical guidance for more reliable benchmarking and model selection.
\end{abstract}

\section{Introduction}

Latent diffusion models~\cite{rombach2022high} have become a dominant paradigm for high-quality image generation, largely because they perform denoising in a compact latent space rather than directly in pixel space.
In this pipeline, an autoencoder (AE) \cite{kingma2013auto} defines the latent representation, and a diffusion model is trained to model the data distribution by denoising samples from a prescribed noise process in that latent space.
Consequently, advances in autoencoder design and training have recently become a key driver of progress in latent diffusion.

Autoencoders face a generation–reconstruction trade-off: making the latent space more generation-friendly can improve diffusion sampling and visual realism, while high-fidelity reconstruction requires preserving instance-specific structure \cite{yao2025reconstruction,fan2025prism}. ImageNet-scale benchmarks \cite{deng2009imagenet} provide a relatively low-cost way to test how an autoencoder interface affects latent diffusion, but they also tend to steer optimization and comparison toward generative scores (e.g., gFID \cite{heusel2017gans}). In practice, we observe a tilt in both metric reporting and model selection that down-weights reconstruction: many papers report only distribution-level reconstruction metrics (e.g., rFID) or omit reconstruction evaluation altogether, and ablation decisions often prioritize the best generative score even when instance-level fidelity degrades. The issue is that these practices can make instance-level structural preservation largely invisible. As a result, while this tilted “seesaw” improves ImageNet generative scores, it raises risks when moving to open-world controllable generation, where control relies on preserving input-specific structure and autoencoder-induced drift can directly undermine condition adherence and generalization.

We theoretically analyze how this tilted evaluation practice impacts the scaling of latent diffusion toward large controllable generation models.
We characterize the discrepancy between the condition extracted from an input image and that extracted from its reconstruction as \emph{condition drift}.
Our analysis shows that autoencoder-induced condition drift becomes a bottleneck for controllable diffusion. It introduces an irreducible control error, so that even if the diffusion backbone fits the latent conditional distribution perfectly, the achievable condition alignment is still lower-bounded by the drift introduced by the autoencoder interface.
This implies that an autoencoder with large condition drift can make a diffusion model hard to use for controllable generation, even when its unconditional generative quality is strong.
Finally, we connect condition drift to reconstruction evaluation.
We argue that reconstruction metrics are critical for diagnosing drift, and that instance-level fidelity metrics are particularly important. Compared to distribution-level reconstruction scores such as rFID, they more directly reflect condition drift and better safeguard against drift-driven mis-selection.

To validate our observations and analysis, we study several recent ImageNet autoencoders and find that condition preservation reveals a gap in ImageNet-centric evaluation.
We build a condition-drift evaluation protocol spanning conditions from a range of controllable generation tasks to better reflect real-world settings.
Using rank-based correlation analysis, we find that gFID is only weakly correlated with both instance-level reconstruction metrics and many condition-consistency measures.
In contrast, reconstruction-oriented signals (especially rFID and simple instance-level metrics such as PSNR) align better with condition preservation.
We further corroborate this conclusion by training ControlNet \cite{zhang2023adding} with two representative autoencoders, where controllable generation quality tracks condition preservation rather than gFID.
Finally, we probe latent-space condition predictability to test whether control-relevant cues are already discarded at encoding time.

Taken together, our study revisits autoencoder evaluation for latent diffusion from three perspectives:
\begin{itemize}
	\item
	\textbf{Observations.}
	We show that reconstruction is often under-reported and that ablation-based selection can favor the best gFID despite degraded reconstruction.

    \item
    \textbf{Analysis.}
    We demonstrate that gFID-dominant evaluation can mask AE-induced condition drift in controllable diffusion, and that reconstruction fidelity provides a practical proxy for diagnosing such drift.

    \item
    \textbf{Empirical study.}
    We quantify how popular metrics relate to condition preservation across ImageNet-scale autoencoders, and identify when gFID-driven selection becomes misaligned with controllability.
\end{itemize}

Based on these findings, we recommend a compact reporting and selection protocol: report generative quality together with at least one instance-level reconstruction metric and explicit condition consistency for the widely used target controls.

\section{Related Work}
\subsection{Latent Diffusion Models}
Latent diffusion models (LDMs)~\cite{rombach2022high} reduce the computational cost of pixel-space diffusion by introducing autoencoders to construct a compact latent space. Specifically, images are encoded into latents where the denoising process is performed, and then decoded back to the pixel space to enable high-resolution synthesis. This paradigm has been widely adopted by modern large-scale text-to-image diffusion models, such as SDXL\cite{podellsdxl}, Stable Diffusion 3~\cite{esser2024scaling}, and FLUX~\cite{flux2024}. Owing to their strong generative capacity, LDMs are also extensively used for downstream controllable generation tasks\cite{11304732}, including spatial control\cite{zhang2023adding}, domain adaptation\cite{cao2025image}, and image editing\cite{huang2025e4c}.

\subsection{Improved Autoencoders in Latent Diffusion}
Autoencoder design for latent diffusion has progressed from vector-quantized tokenizers such as VQGAN to higher-capacity hierarchical VAEs like NVAE, with the shared goal of improving reconstruction fidelity and perceptual quality in latent diffusion pipelines \cite{esser2021taming,vahdat2020nvae,svgt2i2025}. Recent studies suggest that injecting semantic structure into the latent space can effectively enhance generative performance\cite{vtp,shi2025latentdiffusionmodelvariational,leng2025repae}. For instance, VA-VAE encourages the VAE to align with a vision foundation model\cite{yao2025reconstruction}, while RAE directly employs a frozen DINOv2 encoder\cite{zheng2025diffusion}. These approaches report notable improvements in generative quality, as reflected by improved gFID.

\section{Preliminaries}
\label{sec:prelim}

\subsection{Controllable Latent Diffusion Model}
\label{sec:ae}

Many modern diffusion models operate in the latent space of an autoencoder (AE) for efficiency~\cite{rombach2022high}.
Given an image $x\in\mathcal{X}$, the encoder maps it to a latent code
\begin{equation}
    z_0=\mathcal{E}(x)\in\mathcal{Z},
\end{equation}
and reconstruction is obtained by decoding $\hat{x}=\mathcal{D}(z_0)$.
We consider controllable generation where a condition $c$ specifies desired attributes of the corresponding image (e.g., textual description, edges, depth, identity).
Training uses paired data $(x,c)\sim p_{\text{data}}(x,c)$.
Let $z_0=\mathcal{E}(x)$ and let $z_t$ denote a perturbed/noised version of $z_0$ at step $t$~\cite{ho2020denoising,liu2022flow}.
The conditional denoiser/inverter $f_\theta$ (e.g., UNet or DiT) is trained by
\begin{equation}
    \mathcal{L}_{\mathrm{gen}}
    = \mathbb{E}_{(z,c),\,t,u}
    \bigl[\|u - f_\theta(z_t,t,c)\|\bigr],
\end{equation}
where $u$ is the training target (e.g., noise/velocity/clean latent, depending on the formulation).
This objective learns a conditional latent generator (informally, $p_\theta(z_0\mid c)$), and generated samples are mapped back to images through the AE decoder $\mathcal{D}$.

\subsection{Evaluation Metrics and Trade-offs}
\label{sec:prelim_metrics}

In this part, we briefly review the standard evaluation protocols and \textbf{trade-off between generation and reconstruction} for autoencoders in recent literature.

\textbf{gFID (generation FID)} calculates the Fréchet Inception Distance between real data and samples generated by a diffusion model trained on the specific autoencoder. Since gFID directly measures the autoencoder's effectiveness in supporting downstream generative tasks, it is prioritized by many recent works as the primary indicator.

\textbf{Reconstruction metrics} fall into two categories. \textbf{rFID (reconstruction FID)} evaluates the distributional consistency between the entire input dataset and the reconstructed dataset. Besides, {instance-level metrics}, such as PSNR, SSIM\cite{wang2004image}, and LPIPS\cite{zhang2018unreasonable}, measure the fidelity of independent samples by quantifying the pixel-wise or perceptual discrepancy between a specific input $x$ and its reconstruction $\hat{x}$.

Notably, recent studies~\cite{yao2025reconstruction,zheng2025diffusion} have identified a significant trade-off between generation and reconstruction: autoencoders optimized for superior reconstruction often exhibit degraded generation capability (e.g., worse gFID). 
However, in the next section, we show that recent work exhibits an evaluation bias in this trade-off, in which the ability to generate is systematically overemphasized.

\section{Dominance of gFID in AE Evaluation}
\label{sec:imagenet_observations}

To reveal a growing evaluation bias in recently improved autoencoders, we examine several representative improved autoencoders that report strong performance, including VA-VAE\cite{yao2025reconstruction}, REPA-E\cite{leng2025repae}, RAE\cite{zheng2025diffusion}, VTP\cite{vtp}, SVG\cite{shi2025latent}, UAE\cite{fan2025prism}, FAE\cite{gao2025one}, and SVG-T2I\cite{svgt2i2025}.
While reconstruction used to be the central capability of an autoencoder, we observe a clear shift in emphasis: many recent designs prioritize generative quality (i.e., gFID) and, often implicitly, treat reconstruction as secondary.
This section summarizes two empirical observations from how these works report metrics and how they choose final configurations during ablations.
Unless otherwise noted, all models in this table are studied and evaluated on ImageNet, with SVG-T2I being the only exception.

\begin{table}[t!]
\centering
\caption{\textbf{Overview of evaluation metrics and ablation configurations in recent autoencoders.} The left group indicates metrics reported for the final model. The right group marks whether the selected ablation setting corresponded to the best generative (gFID) or reconstruction (rFID) quality. Cells with gray backgrounds indicate the metric was not reported or the model was not included in the ablation analysis.}
\label{tab:merged_autoencoder_analysis}
\small
\begin{threeparttable}
\renewcommand{\arraystretch}{1.2} %

\resizebox{\columnwidth}{!}{%
\begin{tabular}{l c @{\hspace{6pt}\vrule\hspace{6pt}} c c c c @{\hspace{6pt}\vrule\hspace{6pt}} c c}
    \toprule
    \multirow{2}{*}{\textbf{Model}} & \multirow{2}{*}{\textbf{Date}} & 
    \multicolumn{4}{c}{\textbf{Reported Metrics}} & 
    \multicolumn{2}{c}{\textbf{Ablation Selection}} \\
    
    \cmidrule(lr){3-6} \cmidrule(l){7-8}
    
     & & \textbf{rFID} & \textbf{PSNR} & \textbf{LPIPS} & \textbf{SSIM} & \textbf{gFID} & \textbf{rFID} \\
    \midrule
    
    VA-VAE  & Jan 25 & \yescell & \yescell & \yescell & \yescell & \yescell & \nocell \\
    REPA-E  & Apr 25 & \yescell & \nocell  & \nocell  & \nocell  & \yescell & \nocell \\
    RAE     & Oct 25 & \yescell & \nocell  & \nocell  & \nocell  & \yescell & \nocell \\
    VTP     & Oct 25 & \yescell & \nocell  & \nocell  & \nocell  & \yescell & \yescell \\
    SVG     & Oct 25 & \yescell & \nocell  & \nocell  & \nocell  & \yescell & \nocell \\
    UAE     & Dec 25 & \yescell & \yescell & \yescell & \nocell  & \halfcell  & \halfcell \\ %
    FAE     & Dec 25 & \nocell  & \nocell  & \nocell  & \nocell  & \halfcell  & \halfcell \\ %
    SVG-T2I & Dec 25 & \nocell  & \nocell  & \nocell  & \nocell  & \halfcell  & \halfcell \\ %
    
    \bottomrule
\end{tabular}%
}
\end{threeparttable}
\end{table}

\paragraph{Reconstruction quality is increasingly under-reported.}
The left part in table~\ref{tab:merged_autoencoder_analysis} summarizes whether each paper reports reconstruction-related metrics, restricted to those used in the main text to compare methods and baselines.
Here, rFID measures distribution-level consistency between the original images and reconstructions, whereas PSNR/LPIPS/SSIM are instance-level metrics that reflect per-sample fidelity.
Across eight works, two of them do not evaluate reconstruction at all. Meanwhile, four papers report only rFID, while two papers report both distribution-level and instance-level reconstruction metrics.

\begin{observation}{Under-evaluated Reconstruction}{}
Recent high-performing autoencoders often under-reported reconstruction metrics, especially instance-level reconstruction metrics.
\end{observation}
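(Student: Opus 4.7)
The plan is to support this observation empirically from the survey summarized in Table~\ref{tab:merged_autoencoder_analysis}, which enumerates the reconstruction metrics reported by eight representative recent autoencoders. First, I would partition reconstruction metrics into two tiers, distribution-level (rFID) and instance-level (PSNR, LPIPS, SSIM), and compute the fraction of surveyed papers that reports each tier. Reading off the left block of the table, I would note that two of eight works report no reconstruction metric at all, four of eight report only rFID, and only two of eight report any instance-level metric alongside rFID. This makes the asymmetry between distribution-level and instance-level coverage quantitative rather than anecdotal, and directly substantiates the "especially instance-level" qualifier.

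Second, I would reinforce the claim by appealing to the chronological ordering induced by the \textbf{Date} column. Among the more recent entries, the tendency to omit instance-level metrics, or reconstruction evaluation entirely, becomes more pronounced, which supports the "increasingly" language used in the surrounding discussion. I would present these counts as the primary evidence, making clear that the statement is a descriptive summary of current reporting practice rather than a formal theorem, and that its scope is bounded by the specific reference set of Section~\ref{sec:imagenet_observations}.

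The main obstacle is that \emph{often under-reported} is not a sharply defined predicate, so some care is needed to avoid vagueness. To address this, I would fix the reference set as the eight autoencoders listed and operationalize \emph{under-reporting} as the absence of any instance-level reconstruction metric in the main-text comparisons. Under this definition, six of eight works qualify as under-reporting instance-level fidelity, which is a clear majority and justifies the word \emph{often}. A secondary concern is selection bias in the reference set: I would mitigate it by noting that these works were chosen because they each claim strong performance on ImageNet-scale latent-diffusion benchmarks during the covered period, making them the natural population for a claim about \emph{recent high-performing autoencoders}, and I would refrain from extrapolating beyond this population.
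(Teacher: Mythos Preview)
Your proposal is correct and matches the paper's own support for the observation: the paper substantiates it with exactly the same counts read off the left block of Table~\ref{tab:merged_autoencoder_analysis} (two of eight report no reconstruction metric, four report only rFID, and two report any instance-level metric). Your additional operationalization of \emph{often} and the discussion of selection bias go slightly beyond what the paper makes explicit, but the core evidential route is identical; one small caution is that the chronological claim is not monotone (UAE, one of the latest entries, does report instance-level metrics), so you should frame the ``increasingly'' point as a tendency rather than a strict trend.
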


\paragraph{Model selection systematically favors generation over reconstruction.}
Beyond metric reporting, we further observe a systematic bias in model selection.
In the right panel of Table~\ref{tab:merged_autoencoder_analysis}, we examine five papers that perform ablations over key autoencoder design choices (e.g., encoder families or architectural backbones).
Across all five, the chosen configuration is the one with the best gFID, i.e., the strongest generative performance.
Only VTP selects a final model that is simultaneously optimal for both generation and reconstruction metrics within its ablation pool.
This tendency is particularly pronounced in works that replace or augment the encoder with semantic backbones.
For example, several works\cite{zheng2025diffusion,yao2025reconstruction} ablate the choice of semantic encoder and demonstrate that MAE-based variants substantially improve reconstruction metrics\cite{he2022masked}, whereas DINO-based variants score better on generation and semantic probing\cite{oquab2023dinov2}.
Nevertheless, the final selection is frequently driven by the generative score alone.

\begin{observation}{Biased ablation decisions}{}
Recent works often favor the variant with the best generative score (gFID) despite clear reconstruction gains from alternatives.
\end{observation}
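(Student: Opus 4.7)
The plan is to substantiate this observation empirically rather than formally, by systematically auditing the ablation tables of the recent autoencoder papers already listed in \cref{tab:merged_autoencoder_analysis}, and in particular the five that actually present architecture-level ablations (VA-VAE, REPA-E, RAE, VTP, SVG). For each such paper I would extract the candidate variants considered in the ablation pool together with their reported gFID and whichever reconstruction metric is available (most often rFID, sometimes PSNR/LPIPS/SSIM), and then compare the variant the authors promote to their main table against the gFID-argmin and the reconstruction-argmin of the same pool.

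First I would build, per paper, a small matrix whose rows are the candidate configurations (e.g., DINO vs.\ MAE encoder, different decoder depths, different alignment targets) and whose columns are the reported metrics. I would mark in each pool three things: the variant the authors selected, the gFID-optimal variant, and the reconstruction-optimal variant when identifiable. The observation is then a direct summary statistic over these per-paper triples, namely that the selected variant coincides with the gFID winner in all five pools, while coinciding with the reconstruction winner in only one (VTP). The paragraph following \cref{tab:merged_autoencoder_analysis} already flags the MAE-vs-DINO encoder ablations in RAE and VA-VAE as instances where the reconstruction-preferred variant was overruled by the gFID-preferred one; I would use these as worked examples and repeat the same audit for REPA-E and SVG.

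To defend the qualifier \emph{clear reconstruction gains}, I would fix, before aggregation, a simple eligibility rule, for example that the reconstruction-optimal variant must improve rFID (or PSNR) over the chosen variant by more than a small tolerance, so that the claim is not driven by within-noise differences. This also prevents the observation from collapsing when two variants are effectively tied on reconstruction; such cases would be excluded rather than counted as evidence in either direction.

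The main obstacle is that several of these works report only a subset of reconstruction metrics, so the reconstruction-best variant can be ambiguous or only inferable through a single proxy. I would handle this by treating each available reconstruction metric monotonically, and by explicitly flagging any pool where different reconstruction metrics disagree on the ranking, rather than silently selecting the one that supports the claim. Provided the audited pools uniformly show the pattern (selection matches gFID-argmin but not reconstruction-argmin, with VTP the sole exception), the observation follows as a faithful summary of the right-hand panel of \cref{tab:merged_autoencoder_analysis}.
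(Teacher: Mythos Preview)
Your proposal is correct and follows essentially the same approach as the paper: the observation is substantiated by auditing the ablation pools of the five works with architecture-level ablations, checking that the selected variant coincides with the gFID-argmin in all five while matching the reconstruction-argmin only for VTP, and illustrating the pattern with the MAE-vs-DINO encoder choice. The additional eligibility thresholds and disagreement-flagging you propose are sensible methodological refinements, but the underlying evidence and argument are the same as the paper's right-hand panel of \cref{tab:merged_autoencoder_analysis} and its accompanying paragraph.
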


\paragraph{Takeaway.}
These observations suggest that current AE benchmarking implicitly treats reconstruction as secondary once generative performance is strong.
In the next section, we show theoretically that this tilted evaluation can induce an irreversible degradation in controllable generation.

\section{Autoencoder-Induced Condition Drift}
\label{sec:cc_bottleneck}

The previous section highlighted a tilted evaluation practice that often de-emphasizes reconstruction in favor of generative scores.
In this section, we theoretically analyze how the autoencoder induces condition drift, which degrades controllable generation in diffusion models.
We show that this drift induces an alignment limit at the latent optimum, and then explain how reconstruction fidelity constrains and serves as a practical proxy for evaluating drift, which is a safeguard that tilted evaluation can undermine.

\subsection{Formulation}
\label{sec:cc_formulation}

Let $x \sim p_{\text{data}}(x)$ be an image and let $c \in \mathcal{C}$ be a control signal derived from $x$.
For image-grounded controls, the condition is obtained by a deterministic projector $\phi:\mathcal{X}\to\mathcal{C}$ (e.g., canny edges, depth, segmentation), so that $c=\phi(x)$.
An autoencoder consists of an encoder $\mathcal{E}:\mathcal{X}\to\mathcal{Z}$ and decoder $\mathcal{D}:\mathcal{Z}\to\mathcal{X}$, producing the reconstruction
$\hat{x}=(\mathcal{D}\circ \mathcal{E})(x)$.

\begin{definition}[Condition Drift]
\label{def:condition_drift}
Given a projector $\phi$, the autoencoder-induced condition drift is
\begin{equation}
    \Delta_{\mathrm{AE}}(x) \;=\; \|\phi(x) - \phi(\hat{x})\|.
    \label{eq:cc_delta}
\end{equation}
\end{definition}

$\Delta_{\mathrm{AE}}(x)$ measures instance-level structural preservation, i.e., preserving the instance-wise correspondence between $x$ and $\hat{x}$ that is relevant to downstream conditions: it is zero if and only if the condition extracted from the reconstruction matches that of the original.
Throughout, we interpret $\Delta_{\mathrm{AE}}(x)$ as condition drift induced by the autoencoder under the control representation defined by $\phi$.

\subsection{Irreducible Condition Drift from Autoencoders}
\label{sec:cc_irreducible}

\begin{figure}[t!]
    \centering
    \includegraphics[width=\columnwidth]{./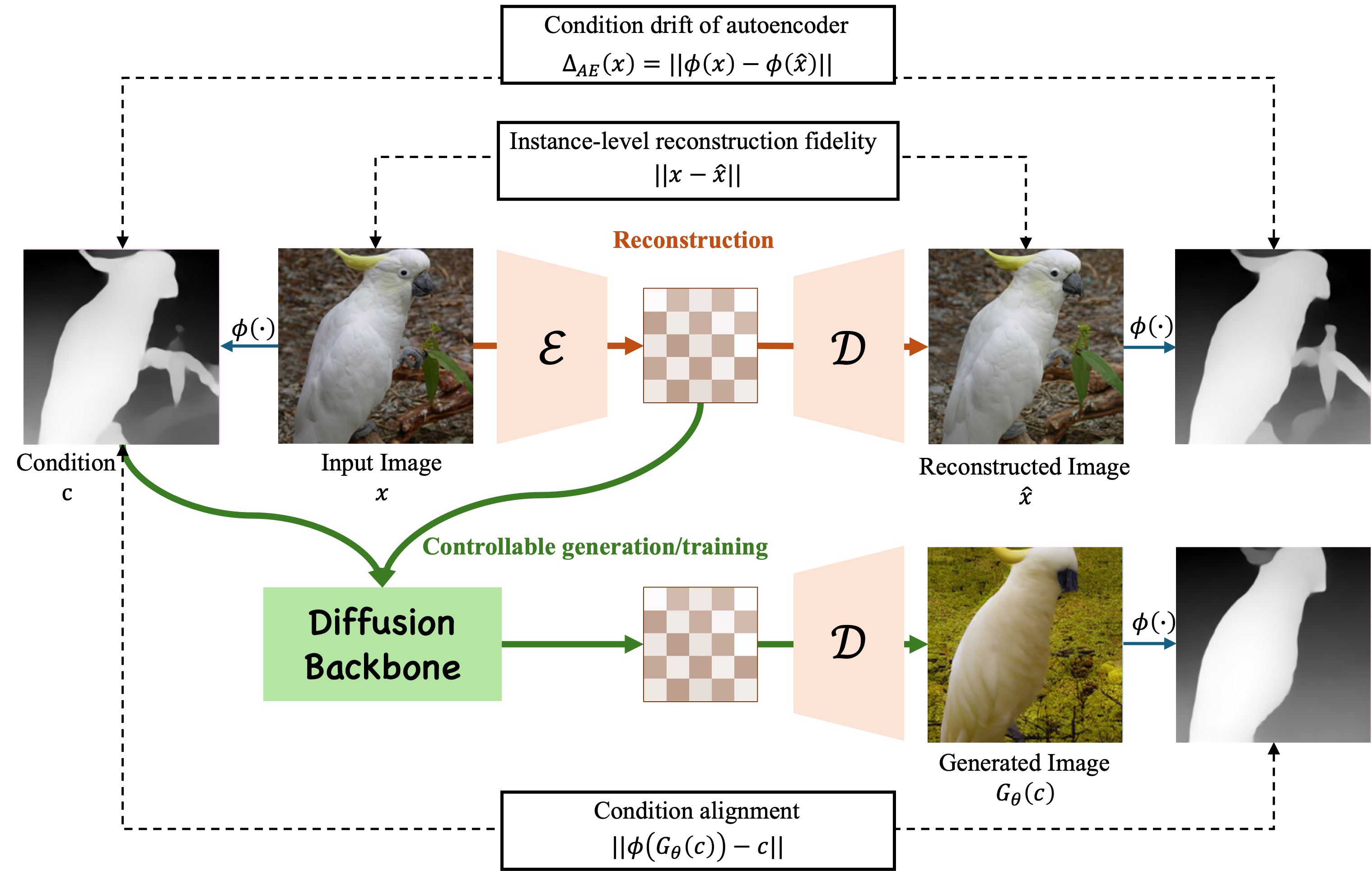}
    \caption{\textbf{Illustration of how autoencoder affects controllable diffusion generation.} We utilize RAE as example, where the controllable generation is realized by trained ControlNet.}
    \label{fig:condition_drift}
\end{figure}

In controllable latent diffusion, given a condition $c$, the diffusion backbone samples a latent code $z \sim p_\theta(z\mid c)$, which is then mapped back to image space by the decoder $x=\mathcal{D}(z)$.
For convenience, we denote the resulting conditional generator by
\[
G_\theta(c) \;=\; \mathcal{D}(z), \qquad z \sim p_\theta(z\mid c),
\]
so that $G_\theta(c)$ is a random image sample conditioned on $c$ and we suppose $G_\theta(c)\sim p_\theta(x\mid c)$.

To quantify downstream controllability, we measure how well generated samples satisfy the desired condition via the conditional alignment error
\begin{equation}
    \mathcal{L}_{\mathrm{align}}(\theta)
    \;=\;
    \mathbb{E}_{c\sim p(c)}\Bigl[\bigl\|\phi(G_\theta(c)) - c\bigr\|\Bigr],
    \label{eq:cc_align_def}
\end{equation}
where $p(c)$ is induced by $c=\phi(x)$ with $x\sim p_{\text{data}}(x)$. We interpret $\mathcal{L}_{\mathrm{align}}$ as a controllability metric for trained diffusion models (Figure~\ref{fig:condition_drift}, bottom). In this subsection, we show that controllability is fundamentally limited by autoencoder-induced condition drift.

\paragraph{Objective shift induced by the autoencoder.}
In controllable generation, the goal is straightforward: given a condition $c$, we want the generator to produce images that satisfy $c$, i.e., $G_\theta(c)$ should follow the desired conditional distribution $p(x\mid c)$.
With a fixed autoencoder interface, however, the condition that is actually obtained after decoding can differ from the nominal condition.
Concretely, when we encode an image with condition $c$ and then decode it back, the reconstruction may correspond to a slightly different condition, which we denote by $c'$.
This is exactly the autoencoder-induced condition drift.

As a result, the supervision used in latent diffusion is subtly mis-specified.
Although training is organized by the nominal label $c$, the samples that can be produced through the decoder are effectively drawn from the reconstruction-induced conditional distribution for $c'$.
In shorthand, autoencoder drift tilts the learning target from
\[
G_\theta(c)\sim p(x\mid c)
\qquad \text{toward} \qquad
G_\theta(c)\sim p(x\mid c').
\]
This shift is harmless only when $c'=c$. Otherwise it creates an irreducible mismatch between the desired condition and what the fixed autoencoder can faithfully realize.

This mismatch becomes a hard limit in the latent-optimal regime.
When the diffusion backbone perfectly fits the latent conditional distribution induced by the fixed autoencoder, generation reproduces the decoded distribution associated with the training pairs for label $c$.
In that idealized limit, the remaining alignment error is entirely due to the autoencoder interface, and is governed by the discrepancy between $c$ and the produced $c'$.

\begin{theorem}[Alignment Limit at Latent Optimum]
\label{thm:cc_floor}
Fix an autoencoder $(\mathcal{E},\mathcal{D})$ and a condition projector $\phi$.
Assume the diffusion backbone perfectly fits the induced latent conditional distribution, i.e., $p_\theta(z\mid c) = p_{\mathcal{E}}(z\mid c)$, and generation follows $G_\theta(c)=\mathcal{D}(z)$ with $z\sim p_\theta(z\mid c)$.
Then the expected alignment error equals the expected autoencoder-induced condition drift:
\begin{equation}
    \mathcal{L}_{\mathrm{align}}(\theta)
    \;=\;
    \mathbb{E}_{x\sim p_{\text{data}}}\bigl[\Delta_{\mathrm{AE}}(x)\bigr].
    \label{eq:cc_floor_eq}
\end{equation}
\end{theorem}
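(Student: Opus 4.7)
The plan is to unfold $\mathcal{L}_{\mathrm{align}}(\theta)$ via the tower property, use the latent-optimality assumption to convert the internal sampling through $p_\theta(z\mid c)$ into sampling through the encoder pushforward of the data conditional, and finally exploit the tautological identity $\phi(x')=c$ on each fiber $\phi^{-1}(c)$ to rewrite the integrand as $\Delta_{\mathrm{AE}}$. The key structural observation is that $p_{\mathcal{E}}(z\mid c)$ is, by construction, the pushforward of $p(x\mid c)$ under the deterministic encoder $\mathcal{E}$, so sampling $z\sim p_{\mathcal{E}}(z\mid c)$ is equivalent in distribution to drawing $x'\sim p(x\mid c)$ and setting $z=\mathcal{E}(x')$.

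Concretely, I would first make the inner randomness of $G_\theta$ explicit and write
\begin{equation}
\mathcal{L}_{\mathrm{align}}(\theta)=\mathbb{E}_{c\sim p(c)}\Bigl[\mathbb{E}_{z\sim p_\theta(z\mid c)}\bigl[\|\phi(\mathcal{D}(z))-c\|\bigr]\Bigr].
\end{equation}
Using the assumption $p_\theta(z\mid c)=p_{\mathcal{E}}(z\mid c)$ together with the pushforward identity above, I replace the inner draw by $z=\mathcal{E}(x')$ with $x'\sim p(x\mid c)$, so that $\mathcal{D}(z)=\hat{x}'$. Since every $x'$ in the support of $p(x\mid c)$ satisfies $\phi(x')=c$ by the very definition of the conditional (recall that $c=\phi(x)$ is deterministic and $p(c)$ arises by disintegration of $p_{\text{data}}$), the integrand rewrites as $\|\phi(\hat{x}')-\phi(x')\|=\Delta_{\mathrm{AE}}(x')$. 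Merging the two nested expectations then gives
\begin{equation}
\mathcal{L}_{\mathrm{align}}(\theta)=\mathbb{E}_{c\sim p(c)}\Bigl[\mathbb{E}_{x'\sim p(x\mid c)}\bigl[\Delta_{\mathrm{AE}}(x')\bigr]\Bigr]=\mathbb{E}_{x'\sim p_{\text{data}}}\bigl[\Delta_{\mathrm{AE}}(x')\bigr],
\end{equation}
which is exactly the stated identity.

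The only delicate step I expect is the replacement of the internal sampler. The diffusion output $G_\theta(c)$ is a priori a generic random object specified only through its conditional law, and is not coupled to any particular $x$ that happened to induce $c$. The subtlety is that the integrand $\|\phi(G_\theta(c))-c\|$ depends on $G_\theta(c)$ solely through this conditional law, so equality in distribution given $c$ is all one needs, and no joint coupling between $G_\theta(c)$ and a specific preimage $x$ has to be constructed. I would state this equality-in-distribution argument explicitly, since it is the single place where the assumption $p_\theta(z\mid c)=p_{\mathcal{E}}(z\mid c)$ enters, and I would briefly remark that the same reduction applies if $\mathcal{E}$ is stochastic, by reading $p_{\mathcal{E}}(z\mid c)$ as the resulting pushforward through the encoder kernel.
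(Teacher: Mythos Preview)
Your proof is correct and essentially coincides with the paper's argument: both expand $\mathcal{L}_{\mathrm{align}}$, invoke $p_\theta(z\mid c)=p_{\mathcal{E}}(z\mid c)$, and then identify sampling $z\sim p_{\mathcal{E}}(z\mid c)$ with drawing a data image and encoding it, so that the integrand becomes $\|\phi(\hat{x})-\phi(x)\|=\Delta_{\mathrm{AE}}(x)$. The only cosmetic difference is that you phrase this identification via the pushforward of $p(x\mid c)$ under $\mathcal{E}$ and the fiber identity $\phi(x')=c$, while the paper states it through the joint factorization $p(c,z)=p(c)\,p_{\mathcal{E}}(z\mid c)$ and then passes back to $x\sim p_{\text{data}}$.
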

A proof is provided in Appendix~\ref{app:proofs}. Theorem~\ref{thm:cc_floor} indicates that, no matter how much we scale the diffusion backbone to better model the data distribution in latent space, controllability remains fundamentally limited by the condition drift induced by the fixed autoencoder.

\begin{insight}{Condition Drift Limits Controllability}{cc_bottleneck}
Autoencoder-induced condition drift constitutes a bottleneck for controllability in latent diffusion.
\end{insight}

\subsection{How to Evaluate Condition Drift}
\label{sec:cc_vs_metrics}

In practice, the projector $\phi$ is typically unknown, since the large diffusion model is expected to serve as a general-purpose generative prior that can be applied to arbitrary controllable generation tasks.
We therefore seek condition-agnostic signals that indicate whether the encode--decode mapping preserves instance-specific structure, i.e., whether it is likely to keep $\phi(x)$ and $\phi(\hat{x})$ close for a broad family of projectors.
As we show below, instance-level reconstruction fidelity provides a conservative constraint on drift under mild stability assumptions, whereas rFID is a marginal distribution metric that does not constrain instance-wise coupling. Thus, tilted evaluation that de-emphasizes reconstruction can weaken safeguards against drift.

\paragraph{Instance-level fidelity as a conservative guardrail.}
A common abstraction is to assume that $\phi$ is locally stable on the natural image manifold under a chosen image metric, captured by a Lipschitz condition.

\begin{assumption}[Stability of the projector]
\label{ass:cc_lipschitz}
Assume the projector $\phi:\mathcal{X}\to\mathcal{C}$ is $K_\phi$-Lipschitz on the image manifold (under the chosen norm on $\mathcal{X}$):
$\|\phi(x)-\phi(\tilde{x})\| \le K_\phi\|x-\tilde{x}\|$.
\end{assumption}

Under this assumption, per-instance reconstruction error upper-bounds drift:
\begin{equation}
    \Delta_{\mathrm{AE}}(x)=\|\phi(x)-\phi(\hat{x})\|
    \le K_\phi\cdot \|x-\hat{x}\|.
    \label{eq:cc_inst_bound}
\end{equation}

This bound is intentionally simple: it does not aim to tightly predict drift for a specific $\phi$.
Instead, it motivates instance-level reconstruction fidelity as a conservative safeguard in open-world settings:
reducing per-image reconstruction error reduces an upper bound on drift for stable projectors.
In particular, pixel-space metrics such as MSE/PSNR directly correspond to $\|x-\hat{x}\|$.
Other instance-level metrics (e.g., SSIM or LPIPS) do not follow from Eq.~\eqref{eq:cc_inst_bound}, but can still be informative in practice. We therefore treat them as empirical proxies and evaluate their relationship with drift in Section~\ref{sec:gap_imagenet_eval}.

\begin{insight}{Instance-level fidelity as a guardrail}{instance_consistency}
Instance-level reconstruction fidelity provides a conservative safeguard against condition drift under stability assumptions.
\end{insight}

\paragraph{rFID measures marginal reconstruction quality rather than instance-wise coupling.}
Let $T=\mathcal{D}\circ\mathcal{E}$ denote the encode--decode mapping and let $p_{\hat{x}} = T_{\#}p_x$ be the induced marginal distribution of reconstructions.
By definition, rFID compares the feature-space marginals of real images and reconstructions, and can be viewed as a distance between $p_x$ and $p_{\hat{x}}$.
Crucially, rFID depends only on the marginal distribution of reconstructions and is blind to how each input $x$ is paired with its reconstruction $\hat{x}=T(x)$.
In contrast, condition drift $\Delta_{\mathrm{AE}}(x)=\|\phi(x)-\phi(\hat{x})\|$ is inherently coupling-dependent: it is defined on paired samples and cannot be determined from marginals alone.

\begin{proposition}[Marginal Matching Does Not Identify Drift]
\label{prop:cc_marginal}
Let $x \sim p_x$ be real images and let $\hat{x}$ be reconstructions with marginal distribution $p_{\hat{x}}$.
If $p_{\hat{x}} = p_x$, then the population rFID is $0$.
However, the expected condition drift $\mathbb{E}\|\phi(x)-\phi(\hat{x})\|$ depends on the coupling (i.e., the joint relationship) between $x$ and $\hat{x}$, and can vary widely even when rFID is perfect.
\end{proposition}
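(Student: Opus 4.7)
The plan is to split Proposition~\ref{prop:cc_marginal} into its two claims and treat them separately: the first (matching marginals $\Rightarrow$ zero population rFID) is essentially a definitional unpacking, whereas the second (drift can vary widely under identical marginals) calls for a concrete two-coupling counterexample.

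For the first claim, I would recall that any FID-type score depends on its two inputs only through the marginal distributions of their Inception features. Concretely, let $\psi$ denote the fixed feature extractor used to define rFID; then rFID is (in its population form) a distance between the pushforward laws $\psi_{\#}p_x$ and $\psi_{\#}p_{\hat{x}}$, e.g., the Fréchet distance between Gaussian moment-matches of these laws. If $p_{\hat{x}} = p_x$ as measures on $\mathcal{X}$, then $\psi_{\#}p_{\hat{x}} = \psi_{\#}p_x$ and the population rFID vanishes. This step is mostly bookkeeping, but it is the place to note explicitly that rFID is a function of the two marginals alone and not of their joint distribution.

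For the second claim, my plan is to exhibit two admissible encode--decode couplings that share the same marginals but produce very different expected drifts. The trivial coupling $T_1 = \mathrm{id}$ gives $\hat{x} = x$, so $\Delta_{\mathrm{AE}} \equiv 0$. For the non-trivial coupling, I would take any measurable, $p_x$-measure-preserving bijection $T_2:\mathcal{X}\to\mathcal{X}$ with $T_2 \neq \mathrm{id}$: partition a region of the support of $p_x$ into two measurable sets $A$, $B$ with $p_x(A) = p_x(B) > 0$, pick a measure-preserving bijection between them, and let $T_2$ act as the resulting swap on $A \cup B$ and as the identity elsewhere. Both $T_1$ and $T_2$ induce the same reconstruction marginal $p_x$, so both give zero population rFID by the first claim.

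To conclude I would argue that the expected drift under $T_2$ is bounded below by a condition-dependent positive constant chosen by the construction. If $A$ and $B$ are picked so that $\inf_{x\in A, y\in B}\|\phi(x)-\phi(y)\| \geq \gamma > 0$ (e.g., two macro-classes that $\phi$ discriminates), then
\[
\mathbb{E}\bigl\|\phi(x)-\phi(T_2(x))\bigr\| \;\geq\; \gamma \cdot p_x(A\cup B) \;>\; 0,
\]
which contrasts with the zero drift under $T_1$. Varying the measure of $A\cup B$ and the separation $\gamma$ lets the drift range from $0$ up to essentially the diameter of $\phi$'s image, establishing that drift is not determined by the marginals. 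The main obstacle is keeping the measure-preserving swap $T_2$ concrete without leaning on heavy measure-theoretic machinery; I would therefore prefer a natural symmetry (such as horizontal flips on a flip-invariant image distribution, or within-class pairing) to make the construction transparent while still formally rigorous.
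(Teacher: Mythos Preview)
Your proposal is correct. The paper, however, does not supply a formal proof of Proposition~\ref{prop:cc_marginal}: it states the proposition and follows it only with an informal paragraph noting that rFID compares feature-space marginals and is ``blind to how each input $x$ is paired with its reconstruction $\hat{x}=T(x)$,'' whereas drift is ``coupling-dependent.'' Your two-step plan makes this precise in a way the paper does not. Your first step (rFID depends only on $\psi_{\#}p_x$ and $\psi_{\#}p_{\hat{x}}$, hence vanishes when $p_{\hat{x}}=p_x$) is exactly the definitional content the paper alludes to. Your second step goes beyond the paper by actually constructing two couplings with identical marginals and arbitrarily different drift: the identity $T_1$ versus a measure-preserving swap $T_2$ across $\phi$-separated regions, yielding $\mathbb{E}\|\phi(x)-\phi(T_2(x))\|\ge \gamma\,p_x(A\cup B)$. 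This is the right kind of argument, and your suggestion to realize $T_2$ via a natural symmetry (e.g., horizontal flips on a flip-invariant $p_x$, paired with a $\phi$ sensitive to orientation) is a clean way to avoid measure-theoretic subtleties. In short, your write-up would constitute a genuine proof where the paper offers only a heuristic remark.
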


Proposition~\ref{prop:cc_marginal} highlights a fundamental non-identifiability: marginal reconstruction quality does not determine instance-wise structure preservation.
An autoencoder may achieve low rFID as long as $T$ acts as a distribution mapper that pushes the reconstruction marginal $p_{\hat{x}}$ toward the data marginal $p_x$, even if it fails to preserve the structure of individual samples and thus induces nontrivial drift. It indicates that rFID may be risky to constraint condition drift.

\begin{insight}{rFID is risky for condition drift}{rfid_consistency}
rFID constrains reconstruction quality at the level of marginal feature distributions and can be a useful heuristic for reconstruction, but low rFID alone cannot certify small condition drift because it does not constrain instance-wise coupling.
\end{insight}

\begin{table*}[t]
\centering
\caption{\textbf{Empirical study on recent powerful autoencoders}. Background color intensity indicates performance.$^\dagger$: results borrowed from the official reports.}
\label{tab:main_result}
\renewcommand{\arraystretch}{1.2} 
\resizebox{\textwidth}{!}{%
\begin{tabular}{lccccccccccccc|cc}
\toprule
\multirow{2}{*}{\textbf{Model}} & \multicolumn{4}{c}{\textbf{Generation Metrics}$^\dagger$} & \multicolumn{4}{c}{\textbf{Reconstruction Metrics}} & \multicolumn{5}{c}{\textbf{Condition Consistency}} & \multicolumn{2}{c}{\textbf{Latent Condition}} \\
\cmidrule(lr){2-5} \cmidrule(lr){6-9} \cmidrule(lr){10-14} \cmidrule(lr){15-16}
 & gFID $\downarrow$ & IS $\uparrow$ & Prec $\uparrow$ & Rec $\uparrow$ & rFID $\downarrow$ & PSNR $\uparrow$ & SSIM $\uparrow$ & LPIPS $\downarrow$ & Spatial $\downarrow$ & Identity $\uparrow$ & Face@R $\uparrow$ & CLIP $\uparrow$ & DINO $\uparrow$ & Canny $\downarrow$ & Depth $\downarrow$ \\
\midrule
VA-VAE & \cc{15}2.17 & \cc{5}205.6 & \cc{0}0.77 & \cc{16}0.65 & \cc{18}0.43 & \cc{15}24.48 & \cc{17}0.78 & \cc{21}0.05 & \cc{16}0.0709 & \cc{14}0.5532 & \cc{12}0.8148 & \cc{15}0.9731 & \cc{17}0.9712 & \cc{14}0.1924 & \cc{12}0.1075 \\
REPA-E & \cc{23}1.69 & \cc{8}219.3 & \cc{0}0.77 & \cc{25}\textbf{0.67} & \cc{18}0.45 & \cc{16}24.68 & \cc{17}0.78 & \cc{21}0.05 & \cc{16}0.0700 & \cc{14}0.5540 & \cc{12}0.8115 & \cc{15}0.9746 & \cc{16}0.9685 & \cc{17}0.1837 & \cc{5}0.1165 \\
RAE    & \cc{25}\textbf{1.51} & \cc{13}242.9 & \cc{25}\textbf{0.79} & \cc{8}0.63 & \cc{7}0.77 & \cc{0}18.41 & \cc{0}0.51 & \cc{0}0.15 & \cc{0}0.1092 & \cc{0}0.3096 & \cc{0}0.7437 & \cc{0}0.9440 & \cc{0}0.9326 & \cc{0}0.2364 & \cc{20}0.1038 \\
VTP-B  & \cc{0}3.88 & - & - & - & \cc{0}0.98 & \cc{13}23.85 & \cc{14}0.74 & \cc{15}0.08 & \cc{12}0.0836 & \cc{9}0.4750 & \cc{11}0.7999 & \cc{10}0.9616 & \cc{8}0.9487 & \cc{11}0.2043 & \cc{0}0.1213 \\
SVG    & \cc{5}3.36 & \cc{0}181.2 & - & - & \cc{3}0.89 & \cc{9}21.95 & \cc{9}0.65 & \cc{8}0.11 & \cc{4}0.0956 & \cc{4}0.3789 & \cc{11}0.7954 & \cc{5}0.9548 & \cc{0}0.9317 & \cc{8}0.2114 & \cc{25}\textbf{0.0996} \\
UAE    & \cc{23}1.68 & \cc{25}\textbf{301.6} & \cc{0}0.77 & \cc{0}0.61 & \cc{25}\textbf{0.24} & \cc{25}\textbf{28.26} & \cc{25}\textbf{0.91} & \cc{25}\textbf{0.03} & \cc{25}\textbf{0.0463} & \cc{25}\textbf{0.7434} & \cc{25}\textbf{0.8786} & \cc{25}\textbf{0.9902} & \cc{25}\textbf{0.9870} & \cc{25}\textbf{0.1543} & \cc{12}0.1057 \\
\bottomrule
\end{tabular}%
}
\vspace{-4mm}
\end{table*}

\begin{figure}[t!]
    \centering
    \includegraphics[width=\columnwidth]{./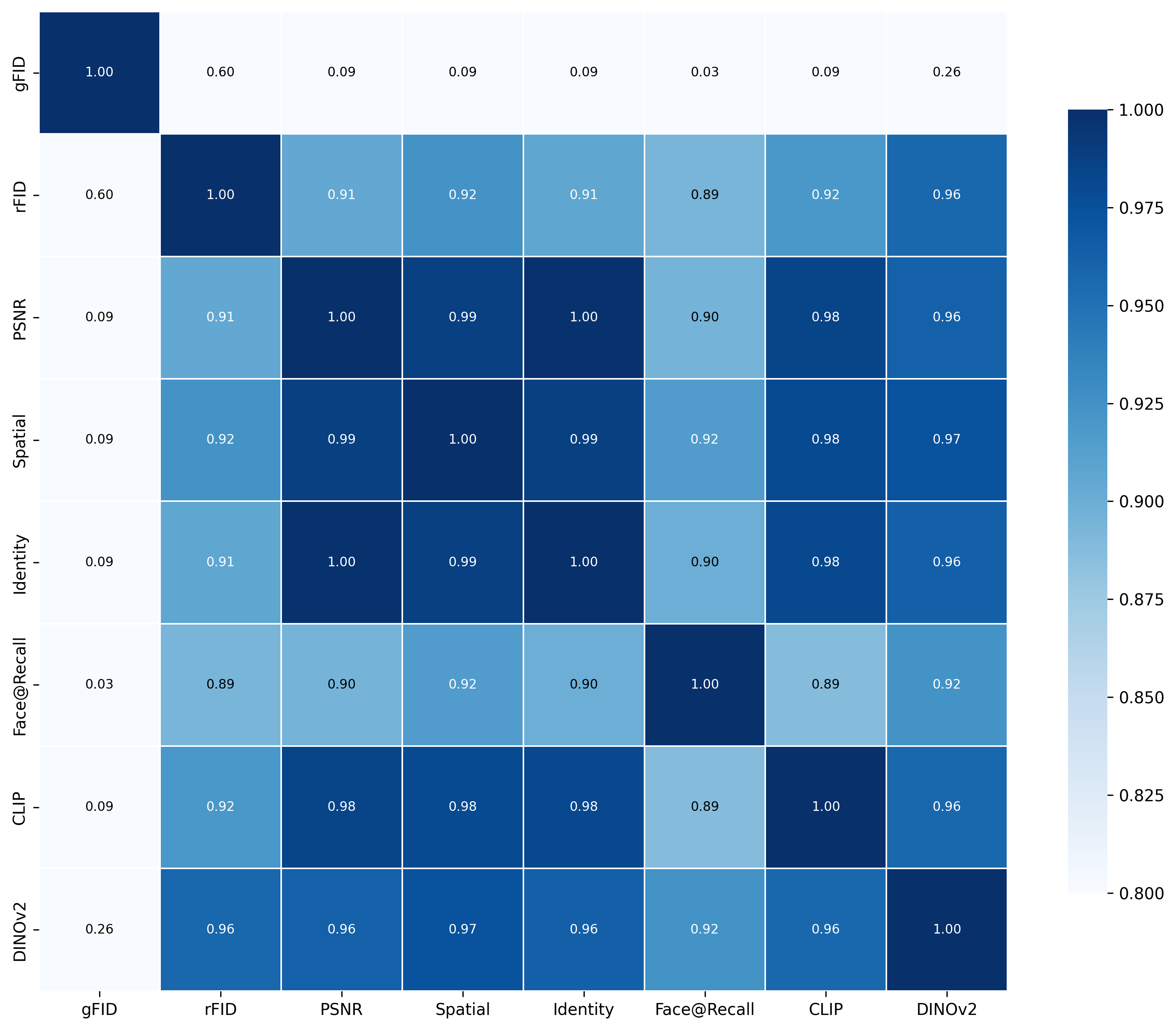}
    \caption{\textbf{Spearman correlation (absolute values) matrix across metrics.}
    gFID shows weak correlation with most instance-level and drift measures, while reconstruction metrics align more strongly. To make the contrast more apparent, we restrict the color scale to the range $0.8–1.0$.}
    \vspace{-5mm}
    \label{fig:spearman}
\end{figure}

\section{Empirical Study}
\label{sec:experiments}
Motivated by the fact that recent ImageNet autoencoder benchmarks often prioritize gFID while under-reporting instance-level reconstruction, and that our analysis suggests marginal metrics cannot certify coupling-dependent condition preservation, we conduct the empirical study below.

\subsection{Condition Drift on ImageNet Autoencoders}
\label{sec:gap_imagenet_eval}

\paragraph{Experimental protocol.}
We evaluate a pool of modern ImageNet autoencoders using three categories of metrics:
(i) generation quality (\textit{Generation Metrics}),
(ii) reconstruction fidelity (\textit{Reconstruction Metrics}), and
(iii) condition drift across low-level structure and high-level semantics (\textit{Condition Consistency}). Specifically, \emph{Spatial} indicates the average drift among several spatial conditions (edge, depth, and segmentation). \emph{Identity} and \emph{Face@R} represent identity similarity and face detection recall, respectively. CLIP and DINO measure embedding similarity. Since CLIP aligns image embeddings with text, it also serves as a proxy for text-aligned semantic similarity between two images.

We evaluate six open-sourced AEs and their 33 variants released in the official codebases. For generation metrics, we use the values reported in the original papers and official reports. We compute reconstruction and condition-drift metrics for all open-sourced variants and report Spearman correlations (Figure~\ref{fig:spearman}) and scatter plots (Figure~\ref{fig:metric_vs_drift_scatter}). The complete model list and full results are provided in Table~\ref{tab:metrics}.
Full implementation details are provided in Appendix~\ref{app:exp_settings_benchmark}.

\paragraph{gFID is weakly aligned with condition drift.}
Table~\ref{tab:main_result} provides concrete evidence for the risk highlighted in Observation~2.
Although gFID is a meaningful indicator of unconditional generation quality, it does not reliably reflect whether an autoencoder preserves control-relevant structure.
For example, RAE attains the strongest gFID among the listed models, yet it exhibits substantially larger drift on several condition projectors.
In contrast, UAE maintains consistently strong condition preservation, even though its gFID is worse than RAE and comparable to REPA-E.
Together, these comparisons illustrate that gFID-centric selection can favor autoencoders that deviate more on target controls even when unconditional generation quality appears strong.

This conclusion is reinforced by a rank-based correlation analysis.
We compute Spearman correlation coefficients across the reported metrics, which capture the agreement between model rankings induced by different measurements, and visualize the relationships with scatter plots.
As shown in Figure~\ref{fig:spearman}, gFID exhibits consistently low correlation with most condition-drift measures, with coefficients around $0.1$ for the majority of projectors, indicating that gFID-based ranking provides little signal about condition preservation.
Figure~\ref{fig:metric_vs_drift_scatter} offers the same takeaway from a complementary view: models with stronger gFID can still incur larger drift, and the best-gFID region contains non-trivial violations of condition consistency.

These results imply that over-emphasizing generation metrics can be detrimental to controllable diffusion.
While such a trade-off may be easy to overlook on ImageNet when evaluation focuses on unconditional realism, it becomes increasingly consequential when scaling to large diffusion backbones and diverse downstream controls, where autoencoder drift can act as a bottleneck that limits generality.

\begin{figure*}[t!]
    \centering
    \begin{subfigure}[t]{\textwidth}
        \centering
        \includegraphics[width=\textwidth]{./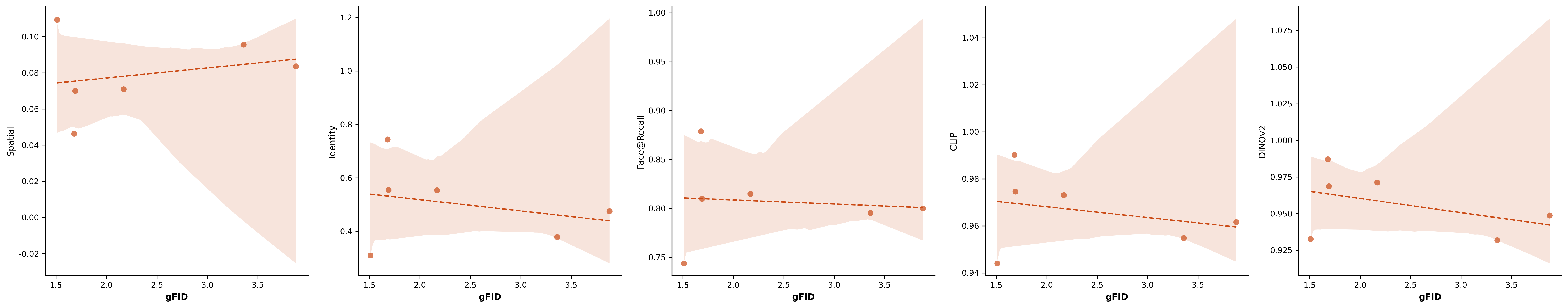}
        \caption{gFID (generation) vs.\ condition drift metrics.}
        \label{fig:gfid_scatter_sub}
    \end{subfigure}

    \begin{subfigure}[t]{\textwidth}
        \centering
        \includegraphics[width=\textwidth]{./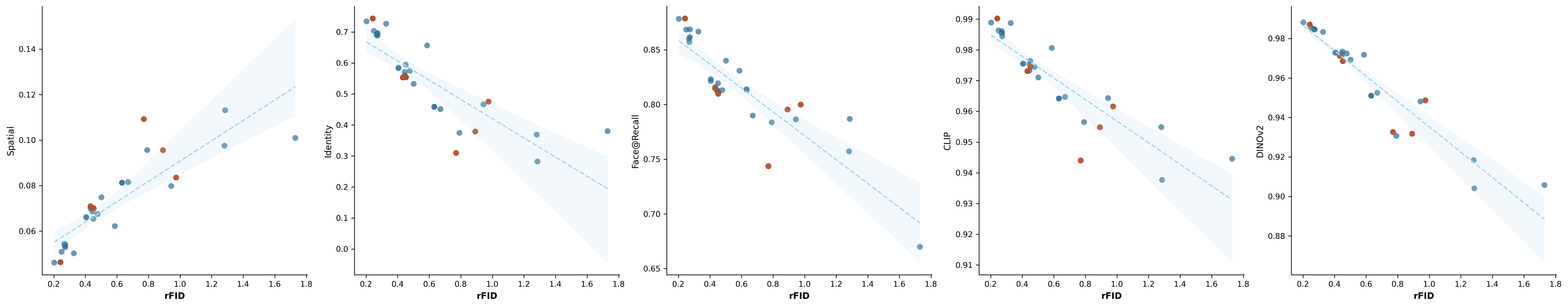}
        \caption{rFID (distribution-level reconstruction) vs.\ condition drift metrics.}
        \label{fig:rfid_scatter_sub}
    \end{subfigure}

    \vspace{0.6em}

    \begin{subfigure}[t]{\textwidth}
        \centering
        \includegraphics[width=\textwidth]{./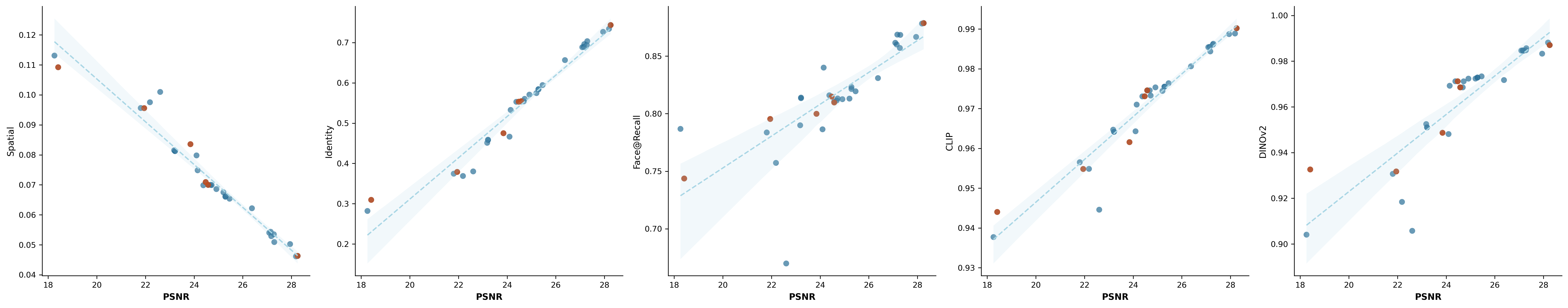}
        \caption{PSNR (instance-level reconstruction) vs.\ condition drift metrics.}
        \label{fig:psnr_scatter_sub}
    \end{subfigure}

    \caption{\textbf{Common metrics vs.\ condition drift.}
    gFID shows weak correlation with drift.
    rFID aligns with drift on average but remains incomplete as a distributional score.
    Instance-level fidelity correlates strongly with many drift measures and serves as a simple sanity signal.}
    \vspace{-4mm}
    \label{fig:metric_vs_drift_scatter}
\end{figure*}

\begin{finding}{gFID can mis-rank condition drift}{}
gFID weakly reflects condition drift and can mis-rank autoencoders for controllable generation.
\end{finding}

\paragraph{Reconstruction is more aligned with condition drift.}
As analyzed in \S~\ref{sec:cc_vs_metrics}, reconstruction metrics are theoretically better aligned with condition drift.
This trend is also visible in Table~\ref{tab:main_result}.
Autoencoders that reconstruct more faithfully at the instance level also tend to preserve control-relevant conditions more reliably.
UAE ranks best on the instance-level fidelity suite and it also yields the strongest condition consistency across both structural projectors and identity/semantic projectors, producing minimal drift.
By contrast, RAE attains the best gFID but its reconstructions are less faithful and its condition drift is noticeably larger on the same control projectors.
Despite weaker gFID than RAE, VA-VAE delivers materially better reconstruction fidelity and correspondingly smaller structural drift along with stronger face consistency. 

Beyond these headline models, we further expand the analysis to a more comprehensive setting, spanning 33 variants from these AE studies (total results are shown in Table~\ref{tab:metrics}).
Figure~\ref{fig:spearman} makes the contrast particularly clear.
Reconstruction metrics, including rFID and PSNR, exhibit substantially higher correlations with the drift measures we consider than gFID.
Moreover, PSNR is more rank-consistent with condition drift than rFID: across the drift measures we consider, PSNR achieves Spearman correlations above $0.96$, whereas rFID is typically around $0.9$.
The scatter plots in Figure~\ref{fig:metric_vs_drift_scatter} corroborate the same pattern.
Both reconstruction metrics provide a more faithful signal for condition drift than generation metrics, and PSNR further shows tighter alignment than rFID in terms of ranking consistency.
We also observe that although Canny edge is not a Lipschitz condition, it also can be measured by instance-level fidelity.


\begin{finding}{Reconstruction tracks condition drift}{}
Reconstruction metrics provide a more faithful assessment of condition drift than gFID, and instance-level fidelity is slightly more informative than rFID.
\end{finding}

\subsection{Further Explorations on Controllable Generation}
\label{sec:further_explorations}

The preceding analysis suggests that autoencoder condition drift can be substantial and is not reliably captured by generation-centric metrics.
In this subsection, we connect these diagnostic findings to controllable diffusion behavior through two complementary explorations that target distinct failure modes.
First, we train ControlNet on two representative ImageNet autoencoders with their corresponding diffusion backbones, and evaluate how the autoencoder affects controlled generation quality and condition adherence.
Second, we further probe the latent representation directly by training lightweight predictors to recover conditions from latents, testing whether control-relevant cues are already discarded at encoding time.
These experiments provide mechanistic evidence that autoencoder drift can materially hinder controllable-generation training and that part of this drift can originate from irreversible information loss during encoding.

\paragraph{Empirical study on ControlNet.}
To further validate how the choice of autoencoder affects controllable generation in practice, we train ControlNet using two representative autoencoders, VA-VAE and RAE, on both Canny-to-image and depth-to-image tasks.
Table~\ref{tab:controlnet_results} reports the resulting condition alignment metrics together with key autoencoder metrics for controlled sampling under each condition.
Although RAE achieves better unconditional ImageNet gFID than VA-VAE, its performance degrades markedly once ControlNet is trained for conditional generation.
Across both conditions, RAE exhibits worse controlled generation quality than VA-VAE, indicating that strong unconditional generation does not translate to reliable controllability when the autoencoder fails to preserve control-relevant structure.

Figure~\ref{fig:controlnet_qual} provides qualitative comparisons that mirror these quantitative results.
Generations based on VA-VAE are visually more coherent and follow the specified control signals more faithfully, whereas RAE more frequently violates the target structure and produces lower-quality outputs.
During training, we also observe that ControlNet optimization with RAE is substantially less stable and converges more slowly, consistent with the hypothesis that encoding-time information loss increases the difficulty of learning robust conditional mappings.

\begin{table}[t!]
\centering
\caption{\textbf{ControlNet studies.}
We train ControlNet on VA-VAE and RAE diffusion models and report condition alignment metrics together with key autoencoder metrics.
}

\label{tab:controlnet_results}
\small
\renewcommand{\arraystretch}{1.15}
\resizebox{\linewidth}{!}{%
\begin{tabular}{c ccc cc cc}
\toprule
\multirow{2}{*}{\textbf{AE}} &
\multicolumn{3}{c}{\textbf{AE Metrics}} &
\multicolumn{2}{c}{\textbf{Canny-to-image}} &
\multicolumn{2}{c}{\textbf{Depth-to-image}} \\
\cmidrule(lr){2-4}\cmidrule(lr){5-6}\cmidrule(lr){7-8}
& gFID $\downarrow$ & rFID $\downarrow$ & PSNR $\uparrow$
& FID $\downarrow$ & L1 $\downarrow$
& FID $\downarrow$ & L1 $\downarrow$ \\
\midrule
VA-VAE & 2.17 & \textbf{0.43} & \textbf{24.48} & \textbf{11.59} & \textbf{0.1962} & \textbf{5.99} & \textbf{0.1280} \\
RAE    & \textbf{1.51} & 0.77 & 18.41 & 76.28 & 0.2021 & 51.02 & 0.1967 \\
\bottomrule
\end{tabular}%
}
\vspace{-5mm}
\end{table}

\begin{figure}[t!]
\centering
\includegraphics[width=\columnwidth]{./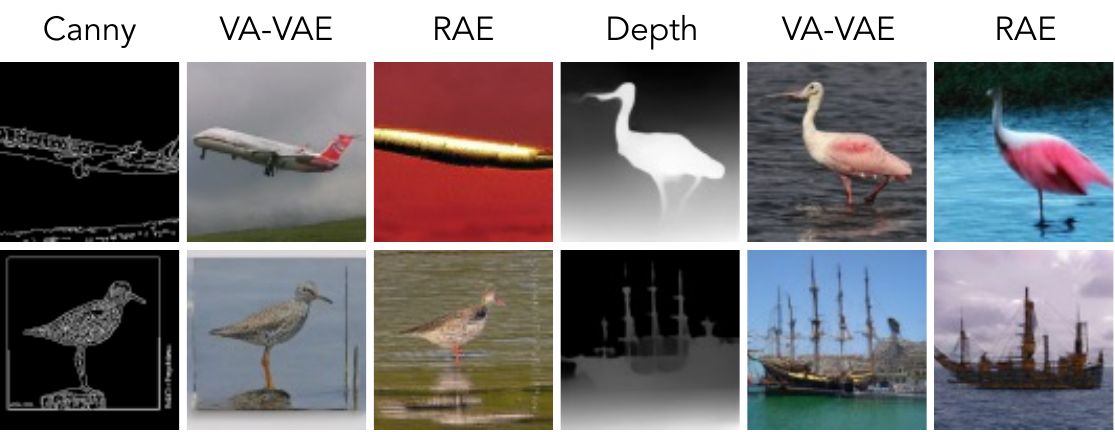}
\caption{\textbf{Qualitative controllable generation results.}
A grid comparing Canny-to-image and Depth-to-image outputs across different frozen autoencoders.}
\vspace{-6mm}
\label{fig:controlnet_qual}
\end{figure}

\begin{finding}{Control is constrained by AE drift}{}
Under identical ControlNet settings, controllability is constrained by autoencoder condition drift.
\end{finding}
\vspace{-4mm}

\paragraph{Latent-space condition prediction probes encoding-time information loss.}
We additionally probe whether condition information is already lost during encoding.
To this end, we train a lightweight predictor $h(\cdot)$ that infers conditions directly from latents $z=\mathcal{E}(x)$ for representative controls such as Canny edges and depth.
If the encoder discards condition cues, then even a reasonably expressive $h$ cannot recover them reliably.
Implementation details are provided in Appendix~\ref{app:exp_settings_probe}.
We run this probe on a subset of autoencoders and summarize the results in Table~\ref{tab:main_result} \emph{Latent Condition} part.
UAE again performs best, which is consistent with its strong condition preservation and suggests that its encoder retains control-relevant information in the latent representation.
RAE exhibits a contrasting behavior.
It performs competitively on depth prediction while being the weakest on Canny prediction, indicating an uneven preservation of condition cues at encoding time.

\begin{finding}{Encoding can lose cues}{}
For some autoencoders, conditions are less recoverable from latents right after encoding, suggesting early loss of control-relevant information.
\end{finding}

\section{Conclusion}

In this work, we revisit a tilted evaluation practice in recent autoencoder studies that over-emphasizes generation-centric metrics.
We show theoretically that de-emphasizing reconstruction weakens constraints on autoencoder-induced \emph{condition drift}, leading to poor preservation of condition information and degraded controllable generation.
Through an empirical study on modern ImageNet autoencoders and their open-sourced variants, we verify the prevalence of gFID-dominant selection and demonstrate its adverse impact on controllability.
We hope our findings broaden the perspective on autoencoder evaluation and help mitigate the gap between ImageNet-centric generation benchmarks and open-world controllable generation.

\newpage
\bibliography{example_paper}
\bibliographystyle{icml2026}

\newpage
\appendix
\onecolumn

\section{Experimental Settings}
\label{app:exp_settings}

\subsection{Condition Drift Evaluation (Section~\ref{sec:gap_imagenet_eval})}
\label{app:exp_settings_benchmark}

\subsubsection{Dataset and Pre-processing}
We conduct our evaluations on the ImageNet validation set (50,000 images). Unless otherwise specified, all reference and reconstructed images are center-cropped and resized to $256 \times 256$ resolution.

\subsubsection{Evaluation Protocol}
\label{app:bench_eval}

For generation evaluation, we borrow the results from their original reports.

\paragraph{Reconstruction evaluation.}
To assess the perceptual quality and diversity of the reconstructed images, we report the Fréchet Inception Distance (FID). We compute FID using a standard InceptionV3 network pretrained on ImageNet-1k \cite{szegedy2016rethinking}, calculating the distance between the feature statistics (mean and covariance) of the reference and reconstructed distributions. For instance-level reconstruction quality, we report PSNR and SSIM. Additionally, we utilize LPIPS (with the AlexNet backbone \cite{krizhevsky2012imagenet}) to measure perceptual similarity in the feature space.

\paragraph{Identity similarity and face detection recall.} For face-related evaluations, we utilize the InsightFace library (model \texttt{buffalo\_s}) \footnote{https://github.com/deepinsight/insightface}. We detect the largest face in both reference and reconstructed images and compute the cosine similarity between their L2-normalized identity embeddings. We also report the face detection ratio to ensure generation stability.

\paragraph{Embedding similarity.}
We measure the cosine similarity between image embeddings extracted by CLIP (ViT-B/32) \cite{radford2021learning} and DINOv2 (ViT-B/14). This assesses how well the reconstructed images capture the global semantic context of the reference. Moreover, CLIP embedding similarity can also be interpreted as a proxy for the similarity between the images' textual descriptions, since CLIP is trained with image--text contrastive learning to align images and their paired captions in a shared embedding space. Hence, two images that are close under cosine similarity tend to correspond to similar captions/prompts (i.e., they would retrieve similar text).

\paragraph{Spatial Control.}
To evaluate the fidelity of spatial conditions, we compute the $\ell_1$ distance between condition maps extracted from the reference and reconstructed images. We utilize a suite of off-the-shelf detectors to extract these maps, including: Canny for edge detection, MiDaS for depth estimation \cite{ranftl2020towards}, and Uniformer for semantic segmentation \cite{li2022uniformer}. All condition maps are normalized prior to distance calculation. The \emph{Spatial} metric is computed by averaging the condition drift over these three conditions.

\subsection{ControlNet Settings (Section~\ref{sec:further_explorations})}
\label{app:exp_settings_controlnet}

\begin{table}[h]
    \centering
    \caption{Training hyperparameters for ControlNet finetuning.}
    \label{tab:control_hyperparams}
    \begin{tabular}{lc}
        \toprule
        Hyperparameter & Value \\
        \midrule
        Base Batch Size & 4 \\
        Gradient Accumulation & 8 \\
        Effective Batch Size & 32 \\
        Learning Rate & $5 \times 10^{-5}$ \\
        Optimizer & AdamW \\
        Gradient Clipping & 0.5 \\
        Total Epochs & 100 \\
        \bottomrule
    \end{tabular}
\end{table}

\paragraph{Training Implementation}
We train the control models on the ImageNet training set using the AdamW optimizer. During this phase, the parameters of the main generative models (VAVAE or RAE) are frozen, and only the control branch parameters are updated. We employ a constant learning rate strategy without a scheduler. To stabilize training, we apply gradient clipping with a threshold of 0.5. We train until the performance converges on the ImageNet validation set. Training hyperparameters are summarized in Table~\ref{tab:control_hyperparams}. The training is conducted with mixed precision to optimize memory usage.

\subsection{Latent-space Probing (Section~\ref{sec:further_explorations})}
\label{app:exp_settings_probe}

\paragraph{Network Architecture}
To reconstruct spatial conditions from the latent representations, we implement a custom VGG-style decoder. The network accepts an input latent tensor of shape $(B, C, H, W)$ and first normalizes it via a 2D Batch Normalization layer. The backbone consists of four upsampling stages, where each stage sequentially applies nearest-neighbor upsampling with a scale factor of 2, followed by two blocks of $3 \times 3$ convolution, Batch Normalization, and ReLU activation. Through these stages, the channel dimension is progressively reduced following the sequence $C \to 128 \to 64 \to 32 \to 16$. The final output layer utilizes a $1 \times 1$ convolution to project the features into a single-channel map with a resolution of $(H \times 16, W \times 16)$. For depth estimation specifically, the output is passed through a Sigmoid activation function to constrain values to the $[0, 1]$ range.

\paragraph{Training Configuration}
We train the decoders for a maximum of 100 epochs using the AdamW optimizer with a learning rate of $10^{-4}$ and a batch size of 128. To prevent overfitting, we employ an early stopping strategy that terminates training if the validation loss does not decrease for 10 consecutive epochs. The loss functions are tailored to the specific modality. For edge detection tasks (e.g., Canny), we optimize a hybrid objective consisting of Binary Cross Entropy (BCE) and Dice loss, both weighted equally at 0.5. For depth estimation, we minimize a composite loss comprising an L1 pixel-wise loss and a gradient loss to ensure structural consistency, with the gradient term weighted by $\alpha = 0.1$.

\section{Proof of Theorem~\ref{thm:cc_floor} (Alignment Limit at Latent Optimum)}
\label{app:proofs}

\begingroup
\renewcommand{\thetheorem}{\ref{thm:cc_floor}} %

\begin{theorem}[Alignment Limit at Latent Optimum]
Fix an autoencoder $(\mathcal{E},\mathcal{D})$ and a condition projector $\phi$.
Assume the diffusion backbone perfectly fits the induced latent conditional distribution, i.e., $p_\theta(z\mid c) = p_{\mathcal{E}}(z\mid c)$, and generation follows $G_\theta(c)=\mathcal{D}(z)$ with $z\sim p_\theta(z\mid c)$.
Then the expected alignment error equals the expected autoencoder-induced condition drift:
\[
\mathcal{L}_{\mathrm{align}}(\theta)
=
\mathbb{E}_{x\sim p_{\text{data}}}\bigl[\Delta_{\mathrm{AE}}(x)\bigr].
\]
\end{theorem}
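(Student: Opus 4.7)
The plan is to unfold $\mathcal{L}_{\mathrm{align}}(\theta)$ using the definitions, apply the latent-optimum assumption to replace the generator's conditional with the encoder-induced one, and then collapse the nested expectation into a single expectation over $x\sim p_{\text{data}}$ via the tower property. I would start from
\[
\mathcal{L}_{\mathrm{align}}(\theta)
= \mathbb{E}_{c\sim p(c)}\,\mathbb{E}_{z\sim p_\theta(z\mid c)}\!\bigl[\|\phi(\mathcal{D}(z))-c\|\bigr],
\]
which is just Eq.~\eqref{eq:cc_align_def} with $G_\theta(c)=\mathcal{D}(z)$ written out. The latent-optimum hypothesis $p_\theta(z\mid c)=p_{\mathcal{E}}(z\mid c)$ lets me substitute the encoder-induced conditional in place of the model conditional without changing the value of the integral.

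Next, I would identify $p_{\mathcal{E}}(z\mid c)$ as the law of $\mathcal{E}(x)$ given $\phi(x)=c$ under $x\sim p_{\text{data}}$, which is exactly how the induced latent conditional is defined in Section~\ref{sec:cc_formulation}. Since $p(c)$ is the pushforward of $p_{\text{data}}$ through $\phi$, the outer expectation over $c\sim p(c)$ and inner expectation over $z\sim p_{\mathcal{E}}(z\mid c)$ jointly reproduce the law of the pair $\bigl(\phi(x),\mathcal{E}(x)\bigr)$ for $x\sim p_{\text{data}}$. By the tower property, the nested expectation therefore collapses to
\[
\mathcal{L}_{\mathrm{align}}(\theta)
= \mathbb{E}_{x\sim p_{\text{data}}}\!\bigl[\|\phi(\mathcal{D}(\mathcal{E}(x))) - \phi(x)\|\bigr].
\]
Using $\hat{x}=\mathcal{D}(\mathcal{E}(x))$ and Definition~\ref{def:condition_drift}, the integrand is exactly $\Delta_{\mathrm{AE}}(x)$, yielding Eq.~\eqref{eq:cc_floor_eq}.

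The only delicate step is justifying the tower collapse, i.e., arguing that sampling $c\sim p(c)$ and then $z\sim p_{\mathcal{E}}(z\mid c)$ is distributionally identical to sampling $x\sim p_{\text{data}}$ and reading off $(\phi(x),\mathcal{E}(x))$. This is standard disintegration, but I would state it explicitly since it is doing all the work: the projector $\phi$ and the encoder $\mathcal{E}$ are deterministic maps of $x$, so the joint law of $(\phi(x),\mathcal{E}(x))$ factors as $p(c)\,p_{\mathcal{E}}(z\mid c)$ by definition of conditional law. I expect this measure-theoretic bookkeeping to be the main obstacle to writing the proof cleanly; everything else is substitution and rewriting. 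A brief remark should also note that the result does not rely on $\mathcal{D}$ being the exact inverse of $\mathcal{E}$ nor on $\phi$ being Lipschitz; it only uses that the generator at the latent optimum produces reconstructions of training images in distribution, so any mismatch between $\phi(x)$ and $\phi(\hat{x})$ persists into $\mathcal{L}_{\mathrm{align}}$.
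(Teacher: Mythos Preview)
Your proposal is correct and follows essentially the same route as the paper: expand $\mathcal{L}_{\mathrm{align}}$ via the sampling definition of $G_\theta$, invoke the latent-optimum assumption to replace $p_\theta(z\mid c)$ by $p_{\mathcal{E}}(z\mid c)$, then use the factorization $p(c,z)=p(c)\,p_{\mathcal{E}}(z\mid c)$ to collapse the double expectation into a single expectation over $x\sim p_{\text{data}}$ and identify the integrand with $\Delta_{\mathrm{AE}}(x)$. Your explicit framing of the key step as disintegration/tower property is exactly what the paper's ``equivalence of sampling procedures'' argument amounts to.
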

\addtocounter{theorem}{-1} %

\endgroup

\begin{proof}[Proof of Theorem~\ref{thm:cc_floor}]
We start from the definition of the alignment error:
\begin{equation}
\mathcal{L}_{\mathrm{align}}(\theta)
=
\mathbb{E}_{c\sim p(c)}\Bigl[\bigl\|\phi(G_\theta(c)) - c\bigr\|\Bigr],
\end{equation}
where generation is defined by drawing $z\sim p_\theta(z\mid c)$ and decoding $G_\theta(c)=\mathcal{D}(z)$.
Substituting this sampling procedure yields
\begin{equation}
\mathcal{L}_{\mathrm{align}}(\theta)
=
\mathbb{E}_{c\sim p(c)}\,\mathbb{E}_{z\sim p_\theta(z\mid c)}
\Bigl[\bigl\|\phi(\mathcal{D}(z)) - c\bigr\|\Bigr].
\label{eq:align_expand}
\end{equation}

By the latent-optimal assumption in Theorem~\ref{thm:cc_floor}, the diffusion backbone perfectly fits the induced latent conditional distribution,
\begin{equation}
p_\theta(z\mid c)=p_{\mathcal{E}}(z\mid c).
\label{eq:latent_opt_assump}
\end{equation}
Applying~\eqref{eq:latent_opt_assump} to~\eqref{eq:align_expand} gives
\begin{equation}
\mathcal{L}_{\mathrm{align}}(\theta)
=
\mathbb{E}_{c\sim p(c)}\,\mathbb{E}_{z\sim p_{\mathcal{E}}(z\mid c)}
\Bigl[\bigl\|\phi(\mathcal{D}(z)) - c\bigr\|\Bigr].
\label{eq:align_induced}
\end{equation}

Next, we use the definition of the induced latent conditional distribution $p_{\mathcal{E}}(z\mid c)$.
By construction, image-grounded pairs are generated by sampling $x\sim p_{\text{data}}(x)$ and setting
\begin{equation}
c=\phi(x),\qquad z=\mathcal{E}(x).
\label{eq:pair_def}
\end{equation}
Let $p(c)$ be the marginal distribution of $c=\phi(x)$ under $x\sim p_{\text{data}}(x)$.
Then the joint distribution of $(c,z)$ induced by~\eqref{eq:pair_def} admits the standard factorization
\begin{equation}
p(c,z)=p(c)\,p_{\mathcal{E}}(z\mid c).
\label{eq:cz_factor}
\end{equation}
Therefore, sampling $c\sim p(c)$ and then $z\sim p_{\mathcal{E}}(z\mid c)$ is equivalent to sampling $(c,z)\sim p(c,z)$, which in turn is equivalent to sampling $x\sim p_{\text{data}}(x)$ and setting $(c,z)$ via~\eqref{eq:pair_def}.
Using this equivalence to rewrite~\eqref{eq:align_induced}, we obtain
\begin{align}
\mathcal{L}_{\mathrm{align}}(\theta)
&=
\mathbb{E}_{x\sim p_{\text{data}}(x)}
\Bigl[\bigl\|\phi(\mathcal{D}(\mathcal{E}(x))) - \phi(x)\bigr\|\Bigr].
\label{eq:align_x}
\end{align}
Finally, by the definition of autoencoder-induced condition drift,
\begin{equation}
\Delta_{\mathrm{AE}}(x)=\|\phi(x)-\phi(\mathcal{D}(\mathcal{E}(x)))\|,
\end{equation}
so~\eqref{eq:align_x} is exactly
\begin{equation}
\mathcal{L}_{\mathrm{align}}(\theta)
=
\mathbb{E}_{x\sim p_{\text{data}}(x)}\bigl[\Delta_{\mathrm{AE}}(x)\bigr],
\end{equation}
which proves~\eqref{eq:cc_floor_eq}.
\end{proof}

\section{Additional Results}
\subsection{Evaluation results on all variants}

We present all results in Table~\ref{tab:metrics}. We omit variants with substantially worse performance (rFID $> 3.0$), as their reconstruction quality is severely degraded. gFID results are taken from official reports, and all other metrics are computed following Appendix~\ref{app:bench_eval}.
We also provide scatter plots in Figures~\ref{fig:total_gfid}--\ref{fig:total_lpips} to illustrate how generation and reconstruction metrics relate to condition drift.
Overall, these results suggest that reconstruction-focused metrics provide a more reliable safeguard for controllability than gFID-dominant model selection.

\begin{table}[t]
\centering
\caption{\textbf{Total results among all variants.} We omit some variants with substantially weaker performance (i.e., rFID $> 3.00$).}
\label{tab:metrics}
\resizebox{\textwidth}{!}{%
\begin{tabular}{ccccccccccccccc}
\toprule
Group & Weight Name & gFID & rFID & PSNR & SSIM & LPIPS & Canny & Depth & Seg & Spatial & Identity & Face@R & CLIP & DINOv2 \\
\midrule
\multirow{8}{*}{VA-VAE} & ldm\_f16d64\_50ep & - & 0.2731 & 27.0905 & 0.8739 & 0.0306 & 0.0993 & 0.0212 & 0.0415 & 0.0540 & 0.6884 & 0.8616 & 0.9854 & 0.9846 \\
 & vavae\_f16d64\_mae\_50ep & - & 0.2696 & 27.2789 & 0.8730 & 0.0306 & 0.0986 & 0.0209 & 0.0410 & 0.0535 & 0.6939 & 0.8571 & 0.9861 & 0.9848 \\
 & vavae\_f16d64\_dinov2\_50ep & - & 0.2674 & 27.1483 & 0.8716 & 0.0304 & 0.1007 & 0.0211 & 0.0413 & 0.0543 & 0.6891 & 0.8601 & 0.9855 & 0.9848 \\
 & ldm\_f16d32\_50ep & - & 0.4777 & 25.2062 & 0.8007 & 0.0478 & 0.1213 & 0.0264 & 0.0549 & 0.0675 & 0.5745 & 0.8131 & 0.9745 & 0.9724 \\
 & vavae\_f16d32\_mae\_50ep & - & 0.4449 & 24.9164 & 0.7931 & 0.0488 & 0.1235 & 0.0268 & 0.0555 & 0.0686 & 0.5707 & 0.8126 & 0.9753 & 0.9724 \\
 & vavae\_f16d32\_dinov2\_50ep & - & 0.4461 & 24.7204 & 0.7834 & 0.0515 & 0.1272 & 0.0273 & 0.0552 & 0.0699 & 0.5603 & 0.8132 & 0.9733 & 0.9712 \\
 & vavae\_f16d32 & 2.17 & 0.4329 & 24.4762 & 0.7761 & 0.0540 & 0.1285 & 0.0275 & 0.0568 & 0.0709 & 0.5532 & 0.8148 & 0.9731 & 0.9712 \\
 & ldm\_f16d16\_50ep & - & 0.6711 & 23.1802 & 0.7187 & 0.0697 & 0.1418 & 0.0334 & 0.0694 & 0.0815 & 0.4512 & 0.7899 & 0.9647 & 0.9525 \\
\midrule
\multirow{1}{*}{RAE} & rae\_dinov2\_reconstruction & 1.51 & 0.7704 & 18.4111 & 0.5074 & 0.1538 & 0.1835 & 0.0520 & 0.0921 & 0.1092 & 0.3096 & 0.7437 & 0.9440 & 0.9326 \\
\midrule
\multirow{3}{*}{VTP} & vtp\_s & - & 1.2814 & 22.1843 & 0.6621 & 0.1108 & 0.1557 & 0.0453 & 0.0917 & 0.0975 & 0.3688 & 0.7573 & 0.9548 & 0.9184 \\
 & vtp\_b & 3.88 & 0.9754 & 23.8501 & 0.7367 & 0.0796 & 0.1396 & 0.0349 & 0.0762 & 0.0836 & 0.4750 & 0.7999 & 0.9616 & 0.9487 \\
 & vtp\_l & - & 0.5019 & 24.1454 & 0.7579 & 0.0607 & 0.1369 & 0.0297 & 0.0580 & 0.0748 & 0.5328 & 0.8400 & 0.9710 & 0.9692 \\
\midrule
\multirow{4}{*}{UAE} & uae-stage2 & - & 0.5869 & 26.3785 & 0.8491 & 0.0569 & 0.1046 & 0.0283 & 0.0537 & 0.0622 & 0.6564 & 0.8309 & 0.9806 & 0.9717 \\
 & uae-stage3 & - & 0.3274 & 27.9472 & 0.8919 & 0.0368 & 0.0881 & 0.0209 & 0.0417 & 0.0502 & 0.7265 & 0.8667 & 0.9887 & 0.9833 \\
 & uae-stage4 & - & 0.2424 & 28.2569 & 0.9056 & 0.0257 & 0.0845 & 0.0188 & 0.0355 & 0.0463 & 0.7434 & 0.8786 & 0.9902 & 0.9870 \\
 & uae & 1.68 & 0.2424 & 28.2569 & 0.9056 & 0.0257 & 0.0845 & 0.0188 & 0.0355 & 0.0463 & 0.7434 & 0.8786 & 0.9902 & 0.9870 \\
\midrule
\multirow{13}{*}{REPA-E} & e2e-flux-vae & - & 0.2031 & 28.1893 & 0.9028 & 0.0248 & 0.0846 & 0.0185 & 0.0353 & 0.0461 & 0.7340 & 0.8783 & 0.9889 & 0.9881 \\
 & e2e-invae & - & 0.4056 & 25.2877 & 0.8053 & 0.0499 & 0.1183 & 0.0262 & 0.0536 & 0.0660 & 0.5843 & 0.8232 & 0.9755 & 0.9728 \\
 & e2e-invae-hf & - & 0.4057 & 25.2877 & 0.8053 & 0.0499 & 0.1183 & 0.0263 & 0.0539 & 0.0662 & 0.5836 & 0.8214 & 0.9755 & 0.9728 \\
 & e2e-qwenimage-vae & - & 0.2500 & 27.2968 & 0.8755 & 0.0316 & 0.0928 & 0.0199 & 0.0400 & 0.0509 & 0.7034 & 0.8684 & 0.9863 & 0.9857 \\
 & e2e-sd3.5-vae & - & 0.2734 & 27.1736 & 0.8748 & 0.0330 & 0.0947 & 0.0219 & 0.0421 & 0.0529 & 0.6961 & 0.8687 & 0.9844 & 0.9845 \\
 & e2e-sdvae & - & 0.6322 & 23.2145 & 0.7217 & 0.0730 & 0.1409 & 0.0330 & 0.0699 & 0.0813 & 0.4581 & 0.8135 & 0.9642 & 0.9510 \\
 & e2e-vavae & - & 0.4528 & 24.5796 & 0.7731 & 0.0551 & 0.1270 & 0.0274 & 0.0556 & 0.0700 & 0.5534 & 0.8106 & 0.9746 & 0.9685 \\
 & e2e-sdvae-hf & - & 0.6326 & 23.2145 & 0.7217 & 0.0730 & 0.1409 & 0.0332 & 0.0697 & 0.0812 & 0.4588 & 0.8140 & 0.9642 & 0.9510 \\
 & e2d-vavae-hf & 1.69 & 0.4527 & 24.5797 & 0.7731 & 0.0551 & 0.1270 & 0.0272 & 0.0558 & 0.0700 & 0.5544 & 0.8097 & 0.9746 & 0.9685 \\
 & invae & - & 0.4514 & 25.4580 & 0.8126 & 0.0486 & 0.1141 & 0.0258 & 0.0562 & 0.0654 & 0.5944 & 0.8194 & 0.9764 & 0.9734 \\
 & sdvae & - & 0.9442 & 24.0991 & 0.7345 & 0.0768 & 0.1295 & 0.0346 & 0.0754 & 0.0798 & 0.4664 & 0.7864 & 0.9643 & 0.9481 \\
 & vavae & - & 0.4337 & 24.3791 & 0.7736 & 0.0557 & 0.1261 & 0.0278 & 0.0557 & 0.0699 & 0.5529 & 0.8159 & 0.9731 & 0.9712 \\
 & e2e-vae & - & 0.4528 & 24.6827 & 0.7773 & 0.0532 & 0.1270 & 0.0274 & 0.0556 & 0.0700 & 0.5540 & 0.8115 & 0.9746 & 0.9685 \\
\midrule
\multirow{1}{*}{SVG} & svg & 3.36 & 0.8920 & 21.9476 & 0.6457 & 0.1146 & 0.1585 & 0.0445 & 0.0837 & 0.0956 & 0.3789 & 0.7954 & 0.9548 & 0.9317 \\
\midrule
\multirow{3}{*}{SVG-T2I} & svg\_t2i\_P\_stage1\_256 & - & 1.2857 & 18.2579 & 0.5059 & 0.1736 & 0.1803 & 0.0582 & 0.1008 & 0.1131 & 0.2821 & 0.7868 & 0.9377 & 0.9041 \\
 & svg\_t2i\_R\_stage1\_256 & - & 0.7918 & 21.8055 & 0.6523 & 0.1147 & 0.1619 & 0.0436 & 0.0813 & 0.0956 & 0.3745 & 0.7837 & 0.9565 & 0.9307 \\
 & svg\_t2i\_R\_stage2\_512 & - & 1.7300 & 22.6055 & 0.6801 & 0.1482 & 0.1511 & 0.0536 & 0.0983 & 0.1010 & 0.3802 & 0.6699 & 0.9446 & 0.9057 \\
\bottomrule
\end{tabular}%
}
\end{table}

\begin{figure}[t!]
    \centering
    \includegraphics[width=\columnwidth]{./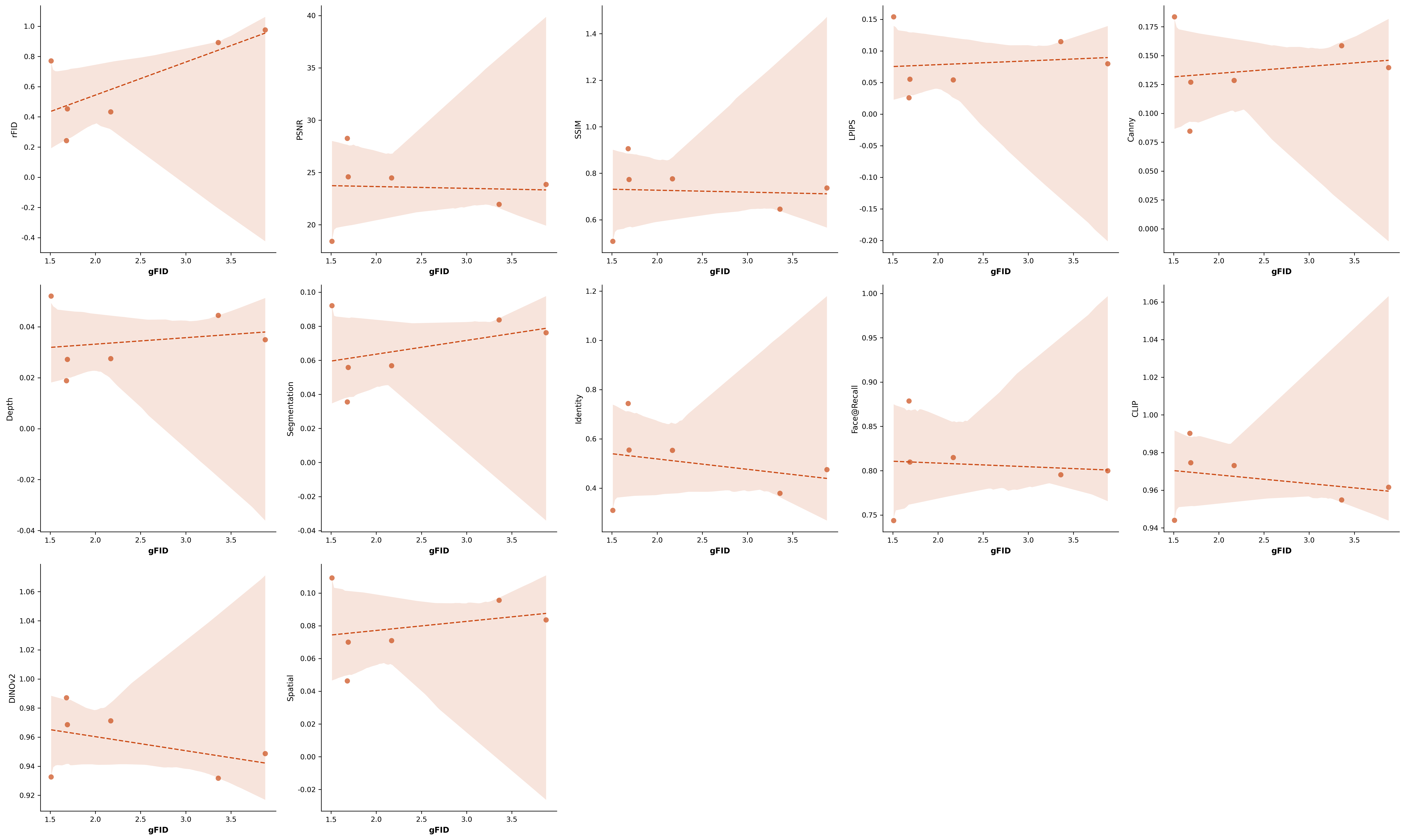}
    \caption{\textbf{Scatter plots between all metrics and gFID.}}
    \label{fig:total_gfid}
\end{figure}

\begin{figure}[t!]
    \centering
    \includegraphics[width=\columnwidth]{./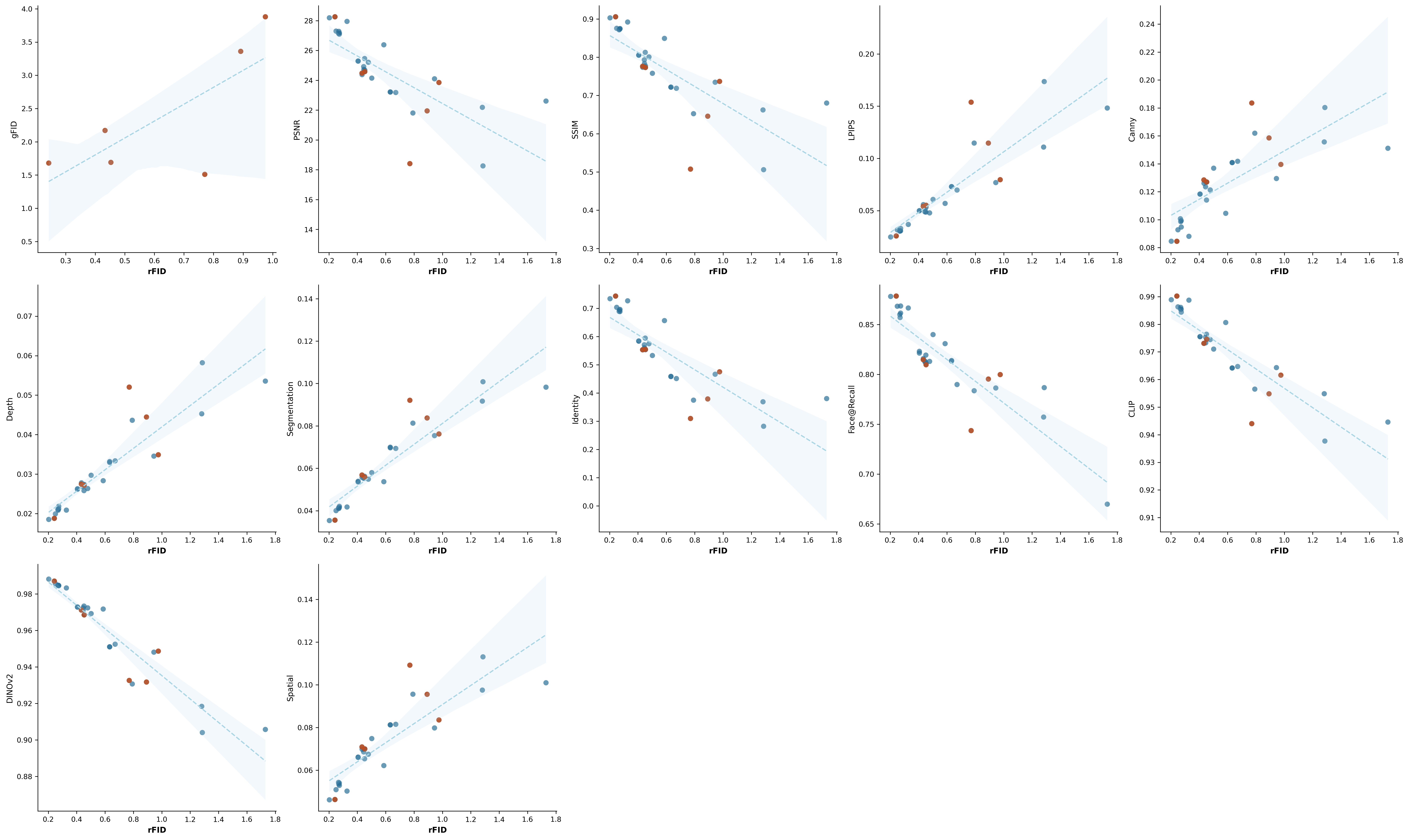}
    \caption{\textbf{Scatter plots between all metrics and rFID.}}
    \label{fig:total_rfid}
\end{figure}

\begin{figure}[t!]
    \centering
    \includegraphics[width=\columnwidth]{./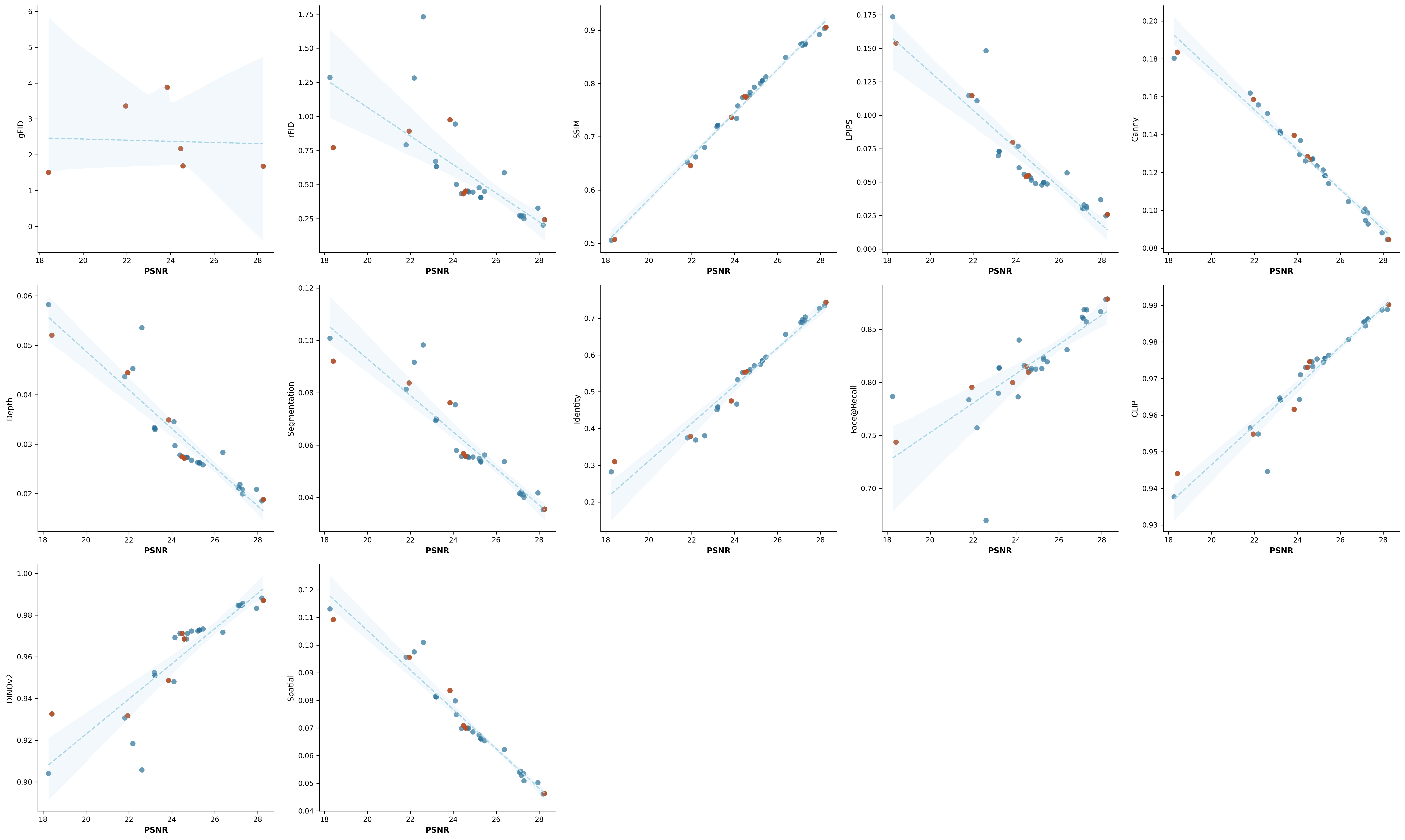}
    \caption{\textbf{Scatter plots between all metrics and PSNR.}}
    \label{fig:total_psnr}
\end{figure}

\begin{figure}[t!]
    \centering
    \includegraphics[width=\columnwidth]{./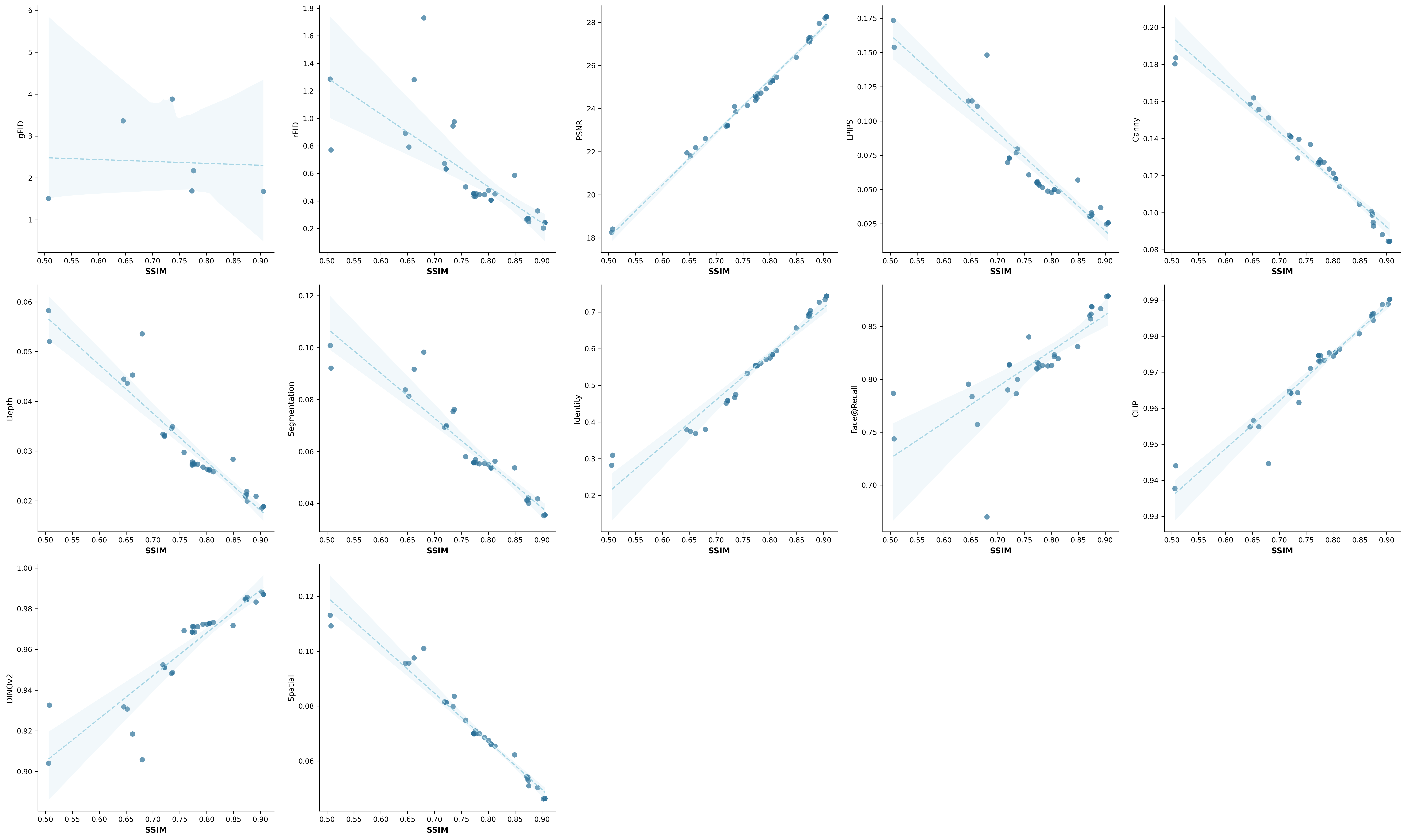}
    \caption{\textbf{Scatter plots between all metrics and SSIM.}}
    \label{fig:total_ssim}
\end{figure}

\begin{figure}[t!]
    \centering
    \includegraphics[width=\columnwidth]{./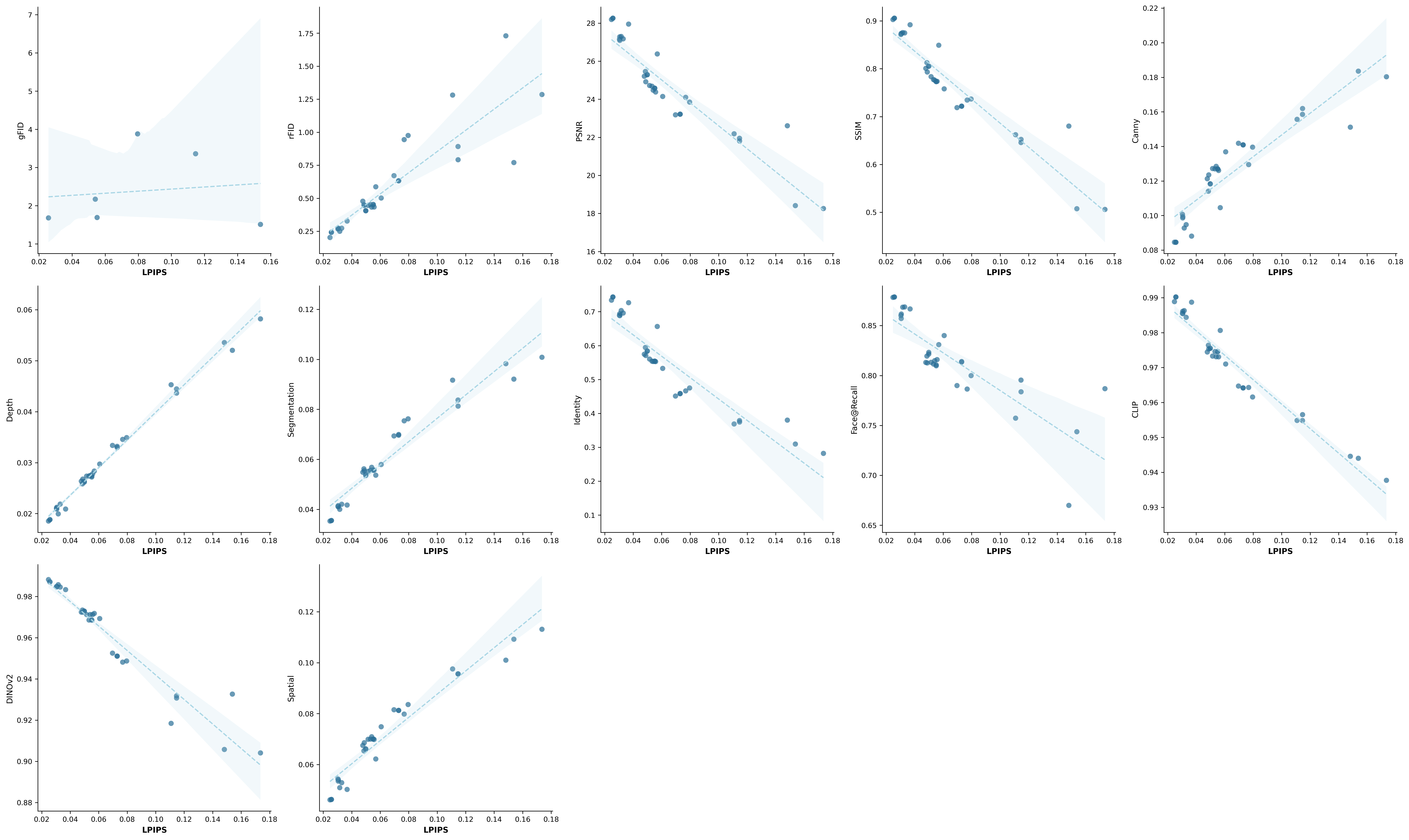}
    \caption{\textbf{Scatter plots between all metrics and LPIPS.}}
    \label{fig:total_lpips}
\end{figure}

\end{document}